\newtheorem{example}{Example}
\newtheorem{theorem}{Theorem}
\newtheorem{definition}{Definition}
\begin{document}
%
\title{A Boolean Function-Theoretic Framework for Expressivity in GNNs with Applications to Fair Graph Mining}
%
%
%

\author{Manjish Pal,~\IEEEmembership{Member,~IEEE}
\thanks{M. Pal is with the National Law University Meghalaya, Shillong-793014 India.}
}

%
%

\markboth{IEEE Transactions on Computational Social Systems}%
{Shell \MakeLowercase{\textit{et al.}}: Bare Demo of IEEEtran.cls for IEEE Journals}
%



\maketitle

\begin{abstract}
We propose a novel expressivity framework for Graph Neural Networks (GNNs) grounded in Boolean function theory, enabling a fine-grained analysis of their ability to capture complex subpopulation structures. We introduce the notion of \textit{Subpopulation Boolean Isomorphism} (SBI) as an invariant that strictly subsumes existing expressivity measures such as Weisfeiler-Lehman (WL), biconnectivity-based, and homomorphism-based frameworks. Our theoretical results identify Fourier degree, circuit class (AC$^0$, NC$^1$), and influence as key barriers to expressivity in fairness-aware GNNs. We design a circuit-traversal-based fairness algorithm capable of handling subpopulations defined by high-complexity Boolean functions, such as parity, which break existing baselines. Experiments on real-world graphs show that our method achieves low fairness gaps across intersectional groups where state-of-the-art methods fail, providing the first principled treatment of GNN expressivity tailored to fairness.
\end{abstract}

\begin{IEEEkeywords}
Expressivity of Graph Neural Networks, Analysis of Boolean Functions, Boolean Function Isomorphism, Intersectional Fairness in Graphs.
\end{IEEEkeywords}

%
\IEEEpeerreviewmaketitle

\section{Introduction}
Graph Neural Networks (GNNs) \cite{kipf2016semi} have emerged as a powerful framework for learning over graph-structured data. GNNs have been extensively explored in a wide range of applications including computer vision, natural language processing, molecular analysis,
data mining, financial risk control, bioinformatics  and anomaly detection. There also has been a growing body of work on using GNNs to implement fairness in downstream tasks like node classification and link prediction in graphs \cite{dong2022fairness}. Due to these diverse applications, significant research has been done to understand the limits to the computational and expressive power of GNNs \cite{zhang2024expressive}. The expressive power of GNNs refers to their ability to distinguish different graph structures or node configurations based on their representations. Formally, it characterises the ability of a GNN model to map structurally distinct graphs into distinct embeddings or predictions. A GNN with higher expressive power can capture nuanced graph features, effectively differentiating even subtle variations in topology or node attributes. In a pioneering study, \cite{morris2019weisfeiler, morris2020weisfeiler,xu2018how} introduced to use graph isomorphism recognition to explain the expressive power of GNNs, leading the trend to analyze the separation ability of GNNs. One of the key findings of the study is that the expressive power of certain GNNs is bounded by the 1-WL (Weissfeiler-Lehman) test \cite{weisfeiler1968reduction}, which is a well-known heuristic for the graph isomorphism problem i.e. two graphs $G$ and $H$ which are indistinguishable by the 1-WL test will produce same node representations by GNNs like GCN, GraphSage, GIN etc. In \cite{morris2019weisfeiler} the notion of $k$-GNN was introduced, whose expressive power is bounded by the $k$-WL test. Subsequently, a large body of work has developed that relies on graph structural properties (which also includes \emph{graph homomorphism} \cite{zhang2024beyond} and \emph{graph polynomials} \cite{puny2023equivariant} ) to define the expressive power of GNNs. Although notions of expressive power based on structural properties led to the development of more structurally expressive GNNs, none of the current techniques addresses the expressive power of GNNs in terms of both \emph{node features and structural properties} of the graph. Although there are GNN phenomena like \emph{oversmoothing} \cite{li2018deeper,oono2019graph} and \emph{oversquashing} \cite{topping2021understanding,di2023over} which rely on the dimensionality of node features, they don't take into account the node feature values. In addition, \cite{kanatsoulis2024graph} demonstrated that GNNs can distinguish any graph with at least one uniquely featured node, showing the importance of node features and that feature embeddings enhance structural representation. When node features sufficiently capture node constants, it has been shown \cite{loukas2019graph,vignac2020building} that GNNs with strong message-passing and update functions can be Turing universal. In such cases, structure plays a minimal role in node differentiation, as further illustrated by a simple graph isomorphism example \cite{loukas2020hard}.   
However, their limitations in expressivity as characterised by the Weisfeiler-Lehman (WL) hierarchy have raised concerns about their ability to capture global structural properties. This becomes particularly problematic in fairness-sensitive applications, where equitable treatment across diverse subpopulations of nodes is required.\\
In this work, we take a step forward to understand GNN expressiveness both in terms of node features and structural properties and  
formulate a fairness-aware perspective on GNN expressivity. Specifically,
given a graph $G$ and certain sensitive subpopulations $\cal G$, which may be defined from a set of sensitive attributes, we measure the expressive power of a GNN $\cal A$ in terms of $G$ and ${\cal G}$. To that end, we model each subpopulation as a subset of nodes and represent it via a binary indicator vector. We then define a Boolean function $f_{\cal G}$ that encodes, $\cal G$ and the output of a GNN is assumed to be dependent on both $f_{\cal G}$ and $G$. In particular, a GNN ensuring fairness across ${\cal G}$ is assumed to be crucially using $f_{\cal G}$. 

\section{Related Work and Background}
\subsection{Expressive Power of GNNs.} The expressive power of Graph Neural Networks (GNNs) has emerged as a central theme in recent research. Foundational works \cite{xu2018how, morris2019weisfeiler, morris2020weisfeiler} established that message-passing GNNs are at most as powerful as the 1-Weisfeiler-Lehman (1-WL) test in distinguishing non-isomorphic graphs. To overcome this limitation, higher-order GNNs inspired by the $k$-WL hierarchy—such as $k$-GNNs \cite{morris2019weisfeiler}, $k$-IGNs, and PPGNs \cite{maron2019provably} were proposed, though they often suffer from prohibitive computational costs. While the WL hierarchy offers a theoretical benchmark, it has been criticized for being overly complex (particularly for $k \geq 3$) and too coarse to distinguish between models bounded between successive WL levels \cite{morris2022speqnets}. More recently, Subgraph GNNs have gained traction as a practical and expressive alternative, with recent work \cite{qian2022ordered,frasca2022understanding, zhang2023complete} providing a refined understanding of their expressivity and connections to WL tests.  Recent work by \cite{zhang2024expressive} proposes a unifying framework, EPNN, for spectral-invariant GNNs and establishes a fine-grained expressiveness hierarchy, showing that all such models are strictly bounded by 3-WL. Complementing this, \cite{gai2025homomorphism} introduce a homomorphism-based expressivity theory for spectral GNNs, revealing that their expressive power corresponds to the class of parallel trees and lies strictly between 1-WL and 2-WL. This work answers key open questions and quantifies the expressiveness and substructure counting capabilities of spectral models. An extensive survey on the state of the art results on the expressive power of GNNs can be found in \cite{zhang2024expressive2}. 
 
\subsection{Approximation Ability of GNNs}
 Let a GNN be modeled as a function \( f_\theta: \mathcal{H} \to \mathbb{R}^{d \times n \times m} \), where \( \mathcal{G} \) denotes the space of graphs \( G = (A, X) \), with \( X \in \mathbb{R}^{d \times n} \) representing the node features, \( d \) being the feature dimension, \( n = |V| \) the number of nodes, and \( m = |\mathcal{H}| \) the number of graphs. The output of the GNN is denoted by \( f_\theta(G) \). The function approximation problem can then be stated as follows: given a target function \( f^* \) over the domain \( \mathcal{G} \), the goal is to approximate \( f^* \) using \( f_\theta \) within a desired level of accuracy. Models with greater expressive power are capable of learning more complex mapping functions. Neural networks are well-known for their expressive capacity, most notably characterized by the universal approximation theorem~\cite{cybenko1989approximation, hornik1991approximation}. As a subclass of neural networks, Graph Neural Networks (GNNs) also possess the ability to approximate functions, which serves as a basis for describing their expressive power. However, due to the inductive bias intrinsic to GNNs—particularly permutation invariance—the classical universal approximation results for standard neural networks cannot be directly applied to GNNs~\cite{battaglia2018relational}. To address this, \cite{maron2019universality,azizian2020expressive,keriven2019universal} analyze the kinds of functions that GNNs can approximate, focusing specifically on permutation-invariant graph functions. They formalize the set of functions a GNN can approximate as a measure of its expressive power.

\subsection{Fairness Aware Graph Learning.}
Fairness-aware graph learning has emerged as a critical area of research that aims to mitigate biases in predictions made over graph-structured data by accounting for sensitive attributes such as gender, race, or region~\cite{dai2022comprehensive, chen2024fairness, dong2023fairness, laclau2022survey}. A wide array of techniques has been developed to improve fairness by modifying different components of the graph learning pipeline, including graph structure, node features, and message passing schemes. Many of these methods primarily focus on single sensitive attributes, often binary or categorical, and hence do not address more complex scenarios involving multiple or overlapping group memberships. Regularization-based approaches such as FairGNN~\cite{dai2021say}, EDITS~\cite{dong2022edits}, and UGE~\cite{wang2022unbiased} aim to enforce group fairness constraints like demographic parity and equalized odds. Other methods like Fairwalk~\cite{rahman2019fairwalk}, DeBayes~\cite{buyl2020debayes}, and FairAdj~\cite{li2020dyadic} debias random walks, embeddings, or adjacency matrices. Algorithms such as NIFTY~\cite{agarwal2021towards} and FairVGNN~\cite{wang2022improving} tackle counterfactual fairness and information leakage, respectively, while message-passing variants like GMMD~\cite{zhu2023fairness} and FMP~\cite{jiang2022fmp} modify the GNN aggregation to produce fairer node embeddings. Recent advances also explore graph generation as a fairness mechanism, including FG-SMOTE~\cite{wang2025fg}, which performs sampling-based generation, and FairWire~\cite{kose2025fairwire}, which leverages diffusion models for fair graph construction. Despite the progress, most existing approaches are limited in scope: they assume a single known sensitive attribute, typically fail to account for intersectional or overlapping groups, and support only a narrow subset of fairness criteria. For instance, methods like Crosswalk~\cite{khajehnejad2021crosswalk} and FairEGM~\cite{current2022fairegm} are restricted to binary attributes, while EqGNN~\cite{singer2022eqgnn} specifically targets equalized odds. Moreover, some methods such as CFC~\cite{bose2019compositional} handle multiple attributes but are tailored to bipartite recommendation scenarios. Overall, fairness-aware graph learning methods are diverse in their strategies—from debiasing input features and adjacency matrices to modifying training objectives and model architectures—but substantial gaps remain in addressing overlapping group fairness and achieving general-purpose fairness guarantees.

\subsection{Fourier Analysis of Boolean Function.}
The analysis of Boolean functions offers a powerful framework to study functions \( f: \{-1,1\}^n \to \{-1,1\} \) using tools from harmonic analysis and probability~\cite{o2014analysis}. Any such function admits a unique representation as a multilinear polynomial:
\[
f(x) = \sum_{S \subseteq [n]} \widehat{f}(S) \chi_S(x), \quad \text{where } \chi_S(x) = \prod_{i \in S} x_i,
\]
and the coefficients \( \widehat{f}(S) \) are the Fourier coefficients with respect to the uniform measure. Parseval’s identity ensures that \( \sum_S \widehat{f}(S)^2 = 1 \) for Boolean-valued functions. The \emph{influence} of variable \( i \) is defined as \( \mathrm{Inf}_i(f) = \sum_{S \ni i} \widehat{f}(S)^2 \), and the total influence \( \mathrm{Inf}(f) \) captures the function’s average sensitivity to input perturbations. The \emph{noise stability} of \( f \) under correlation parameter \( \rho \in [0,1] \) is given by
\[
\mathrm{Stab}_\rho(f) = \sum_{S \subseteq [n]} \rho^{|S|} \widehat{f}(S)^2,
\]
which decays as high-degree coefficients dominate. The \emph{hypercontractive inequality} relates norms under noise and implies that functions with low-degree spectral concentration are stable under random perturbations~\cite{bonami1970etude, beckner1975inequalities}. Additionally, the \emph{junta theorem} states that if the total influence is small, then the function is close to depending only on a small subset of variables~\cite{friedgut1998boolean}. These tools are widely used in learning theory, social choice theory, and computational complexity, particularly in the study of threshold phenomena, noise sensitivity, and approximation by low-degree polynomials~\cite{o2014analysis, odonnell2008some}.

\section{Preliminaries}
Let $G = (V, E)$ be an attributed graph with $|V| = n$ nodes and $|E| = m$ edges. A collection of subpopulations $\cal G \subseteq 2^{V}$ is also given. A subpopulation $S \subseteq V$ is encoded as a binary string $\chi_S \in \{0,1\}^n$ where $\chi_S(i) = 1$ if $v_i \in S$ and $0$ otherwise. Thus, the collection $\cal G$ can be encoded as boolean function $f_{\cal G}: \{0,1\}^n \to \{0,1\}$ that indicates whether fairness should be enforced on $S$. In standard fairness applications $\cal G$ can be obtained directly from node attributes, but in our analysis, we consider $\cal G$ to be a ``rich'' set of possibly overlapping populations. Throughout this paper, we work with Boolean functions defined on the hypercube $\{-1,1\}^n$ rather than $\{0,1\}^n$. This convention is standard in the Fourier analysis of Boolean functions, as it yields a simpler algebraic structure. In particular, the orthonormal basis consists of parity functions $\chi_S(x) = \prod_{i \in S} x_i$ for $S \subseteq [n]$, and inner products and expectations are taken over the uniform measure on $\{-1,1\}^n$. 
Any Boolean function $f: \{0,1\}^n \to \{0,1\}$ can be transformed into an equivalent function $\tilde{f}: \{-1,1\}^n \to \{-1,1\}$ via a change of variables $x_i = 1 - 2z_i$, where $z_i \in \{0,1\}$ and $x_i \in \{-1,1\}$. This transformation preserves the structural properties of the function while enabling cleaner spectral representations.
A GNN architecture $\mathcal{A}$ with $T$ layers computes node embeddings $h_v^{(T)}$ via message passing. The output representation for each node depends on the local neighborhood up to $T$ hops.
We refer to standard circuit complexity classes used in Boolean function analysis \cite{arora2009computational}. The class AC$^0$ consists of constant-depth, polynomial-size circuits with unbounded fan-in AND, OR, and NOT gates. These circuits can compute simple functions like conjunctions, disjunctions, and short decision lists, but cannot express parity or majority. In contrast, NC$^1$ contains circuits of logarithmic depth and polynomial size with bounded fan-in gates, and can express a much broader class of functions, including parity, nested logic, and threshold-based conditions. These distinctions serve as the basis for our expressivity hierarchy in fairness-aware GNNs.

\section{Boolean Fairness Functions Expressivity of GNNs}
In this section, We extend the classical notion of GNN expressivity which is based on the WL test, to include the ability to distinguish graphs based on 
a given set of subpopulations $f_{\cal G}$. We first recall the meaning of the classical Weisfeilar-Lehman expressivity of GNNs.

\begin{definition}[1-WL Expressivity] 
A GNN architecture $\cal A$ is said to be \emph{1-WL expressive} if, for any two non-isomorphic graphs $G$ and $H$ on a set of $n$ nodes that are distinguishable by the 1-dimensional Weisfeiler-Lehman (1-WL) test, the GNN maps them to different node representations. Formally, for any such pair $(G,H)$, the multiset of node embeddings produced by the GNN differs:
\[
\text{1-WL}(G) \neq \text{1-WL}(H) \implies {\cal A}(G) \neq {\cal A}(H)
\]
where the $\neq$ sign refers to the multiset non equality (due to the permutation invariance of GNNs and WL test.).
\end{definition}
This means that the GNN is at least as powerful as the 1-WL test in distinguishing non-isomorphic graphs.

\begin{definition} [Function-level Expressivity] Let $G \neq H$ be two non-isomorphic graphs on $n$ nodes with associated fairness subpopulations $f_G$ and $f_{H}$. A GNN architecture $\mathcal{A}$ has function-level expressivity if:
\[
    f_G \not\equiv f_{H} \Rightarrow \mathcal{A}(G) \not\equiv \mathcal{A}(H)
\]
where the condition $f_G \not\equiv f_{H}$ refers to the isomorphism equivalence of the boolean functions $f_G$ and $f_H$. 
\end{definition}
The above definition reflects the requirement that semantically different subpopulations across graphs should lead to distinguishable embeddings.

\begin{theorem}
The graph isomorphism problem ($\mathsf{GI}$) reduces in polynomial time to the subpopulation boolean function isomorphism problem ($\mathsf{SubIso}$).
\end{theorem}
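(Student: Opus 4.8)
The plan is to exhibit an explicit, support-level encoding of a graph as a fairness Boolean function so that a coordinate permutation witnessing $\mathsf{SubIso}$ is forced to be a vertex relabelling witnessing $\mathsf{GI}$, and conversely. Given an instance $(G,H)$ of $\mathsf{GI}$, I first dispose of the trivial case: if $|V(G)| \neq |V(H)|$ the graphs are non-isomorphic, and I output a fixed NO-instance of $\mathsf{SubIso}$. Otherwise both graphs have $n$ nodes and I index the coordinates of $\{0,1\}^n$ by the vertices. I then define the reduction map $G \mapsto f_G$ by making the \emph{fair subpopulations} of $f_G$ be exactly the edges of $G$: set $f_G(\chi_S)=1$ iff $S$ is a two-element set $\{u,v\}$ with $\{u,v\}\in E(G)$, and $f_G=0$ on every other input. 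Equivalently, the on-set (support, i.e.\ the collection of fairness-relevant subpopulations) of $f_G$ is $\{\chi_{\{u,v\}} : \{u,v\}\in E(G)\}$, a family of $m \le \binom{n}{2}$ subpopulations, each of Hamming weight $2$. This is where the ``subpopulation'' structure of $\mathsf{SubIso}$ does the work: the graph is encoded purely through which node-subsets are declared fair.

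The correctness hinges on the fact that a coordinate permutation preserves Hamming weight. With the convention $(\pi\cdot x)_i = x_{\pi(i)}$ one computes $\pi\cdot \chi_{\{a,b\}} = \chi_{\{\pi^{-1}(a),\pi^{-1}(b)\}}$. In the forward direction, suppose $\sigma:V(G)\to V(H)$ is a graph isomorphism; put $\pi=\sigma^{-1}$, so that $\pi\cdot\chi_{\{a,b\}}=\chi_{\{\sigma(a),\sigma(b)\}}$. On inputs of weight $\neq 2$ both $f_G$ and $f_H\circ\pi$ vanish by construction; on a weight-$2$ input $\chi_{\{a,b\}}$ the identity $f_G(\chi_{\{a,b\}})=f_H(\pi\cdot\chi_{\{a,b\}})$ reduces exactly to the edge-preservation condition $\{a,b\}\in E(G)\iff\{\sigma(a),\sigma(b)\}\in E(H)$, which holds since $\sigma$ is an isomorphism. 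Hence $f_G\equiv f_H\circ\pi$, so $f_G$ and $f_H$ are $\mathsf{SubIso}$-isomorphic.

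For the backward direction, suppose some $\pi\in S_n$ satisfies $f_G(x)=f_H(\pi\cdot x)$ for all $x$. Because $\pi\cdot(\cdot)$ preserves Hamming weight and $f_G$ is supported only on weight-$2$ strings, restricting the identity to weight-$2$ inputs shows that $\sigma:=\pi^{-1}$ induces a bijection between $E(G)$ and $E(H)$ satisfying $\{a,b\}\in E(G)\iff\{\sigma(a),\sigma(b)\}\in E(H)$; that is, $\sigma$ is a graph isomorphism $G\to H$. Isolated vertices cause no trouble: a degree-$0$ vertex of $G$ appears in no on-input, and the weight-$2$ correspondence then forces its image to have degree $0$ in $H$. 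The two directions together give $G\cong H \iff f_G\cong_{\mathsf{SubIso}} f_H$.

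The step I expect to need the most care — and the one the encoding is designed around — is keeping the reduction \emph{polynomial}. Representing $f_G$ by its $2^n$-entry truth table would be fatal. The resolution is precisely the subpopulation viewpoint: the fairness function is specified by its on-set, i.e.\ the explicit list of fair subpopulations, and for the edge encoding that on-set is just the edge list of $G$, of size $O(m)$; equivalently $f_G$ admits the polynomial-size DNF $\bigvee_{\{u,v\}\in E(G)}\big(z_u\wedge z_v\wedge\bigwedge_{w\notin\{u,v\}}\neg z_w\big)$ of size $O(nm)$. Under either representation the map $(G,H)\mapsto(f_G,f_H)$ is computable in polynomial time, which completes the reduction. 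A secondary point to verify is that the convention for how $\pi$ acts on coordinates is applied consistently, so that the permutation extracted from a $\mathsf{SubIso}$ witness is genuinely $\sigma=\pi^{-1}$ (a vertex relabelling of $G$ onto $H$) rather than an unrelated map.
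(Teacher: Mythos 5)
Your proposal is correct and follows essentially the same reduction as the paper: encode each graph by the Boolean function whose on-set consists exactly of the weight-$2$ indicator vectors of edges, and show that coordinate-permutation isomorphism of $f_G,f_H$ coincides with graph isomorphism of $G,H$. You additionally pin down details the paper leaves implicit --- the on-set/DNF representation that keeps the reduction polynomial, the precise direction of the permutation action (so the witness is $\sigma=\pi^{-1}$), and the trivial unequal-size and isolated-vertex cases --- all of which strengthen rather than alter the argument.
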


\begin{proof}
Let $G = (V, E_G)$ and $H = (V, E_H)$ be two undirected graphs on $n$ vertices, with vertex set $V = \{1, 2, \dots, n\}$. Define Boolean functions $f_G, f_H : \{0,1\}^n \to \{0,1\}$ as follows. For each $x \in \{0,1\}^n$ with Hamming weight $\mathrm{wt}(x) = 2$, let the support of $x$ be the unordered pair $\{u,v\} \subseteq V$. Then define:
\[
f_G(x) = 
\begin{cases}
1 & \text{if } \{u,v\} \in E_G \\
0 & \text{otherwise}
\end{cases}, \quad
f_H(x) = 
\begin{cases}
1 & \text{if } \{u,v\} \in E_H \\
0 & \text{otherwise}
\end{cases}
\]
and for all $x$ with $\mathrm{wt}(x) \neq 2$, define $f_G(x) = f_H(x) = 0$. Let $G \cong H$, then there exists a permutation $\pi \in S_n$ such that $\{u,v\} \in E_G \iff \{\pi(u), \pi(v)\} \in E_H$. This implies that for any $x \in \{0,1\}^n$ with $\mathrm{wt}(x) = 2$ and $\mathrm{supp}(x) = \{u,v\}$, we have:
\[
f_G(x) = 1 \iff f_H(\pi(x)) = 1
\]
and the same holds for inputs where the value is zero. Thus $f_G(x) = f_H(\pi(x))$ for all $x \in \{0,1\}^n$, i.e., $f_G$ and $f_H$ are isomorphic. Conversely, if $f_G$ and $f_H$ are isomorphic under a permutation $\pi \in S_n$, then for every $\{u,v\} \in V$, the edge $\{u,v\} \in E_G$ if and only if $\{\pi(u), \pi(v)\} \in E_H$. Therefore, $\pi$ defines an isomorphism from $G$ to $H$. The construction of $f_G$ and $f_H$ from $G$ and $H$ involves enumerating each edge and constructing its weight-2 binary encoding, which can be done in polynomial time. Thus, the reduction from graph isomorphism to subpopulation Boolean function isomorphism is polynomial-time computable.
\end{proof}

It is easy to see that in general $\mathsf{SubIso}$ doesn't reduce to $\mathsf{GI}$ because if the $\mathsf{SubIso}$ boolean function instance corresponds to subpopulations of size more than 3, then they can't be captured using graphs. The Weisfeiler-Lehman (WL) test, though strictly weaker than $\mathsf{GI}$, is widely used to benchmark the expressiveness of GNNs due to its structural similarity with message passing. This alignment has made WL a natural and practical proxy despite its limitations. In contrast, the subpopulation isomorphism problem—which we show to be harder than GI—captures a richer class of boolean subpopulation distinctions. Just as WL serves as a lower bound, subpopulation isomorphism can serve as a more stringent benchmark for GNN expressiveness. From complexity-theoretic terminology, we can write $\mathsf{WL} \subsetneq \mathsf{GI} \subsetneq   \mathsf{SubIso}$. The following result shows that if the subpopulation boolean function has a high Fourier degree, then no GNN with bounded message passing depth and standard aggregation functions can determine $f$ even approximately. 

\begin{theorem}
Let $f: \{0,1\}^n \to \{0,1\}$ be a Boolean function with Fourier degree $\deg_{\mathcal{F}}(f) = d$, i.e., there exists a subset $S \subseteq [n]$ with $|S| = d$ such that the Fourier coefficient $\hat{f}(S) \neq 0$. Then, for any message-passing GNN architecture $\mathcal{A}_T$ of depth $T$ defined over a bounded-degree graph with maximum degree $C$, and using standard permutation-invariant aggregation functions (e.g., sum, mean, max), if $T < \log_C d$, then:
\[
\inf_{g \in \mathcal{A}_T} \mathbb{E}_{x \sim \{0,1\}^n} \left[ |f(x) - g(x)| \right] \geq \Omega(1).
\]
In other words, no such GNN can exactly represent or even approximate $f$ to arbitrary precision.
\end{theorem}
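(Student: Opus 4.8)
The plan is to exploit the \emph{locality} of message passing. After $T$ rounds the embedding $h_v^{(T)}$ of a node $v$ is a function only of the input features lying in its $T$-hop neighborhood $N_T(v)$. In a graph of maximum degree $C$, counting vertices reachable within $T$ steps gives
\[
|N_T(v)| \;\le\; \sum_{t=0}^{T} C^{t} \;\le\; 2C^{T},
\]
so the hypothesis $T < \log_C d$ forces each receptive field to contain (up to the geometric-series constant) strictly fewer than $d$ input coordinates. The first step I would carry out is to make this count exact, tracking the constant so that $|N_T(v)| < d$ holds genuinely; this may require stating the hypothesis as $T \le \log_C d - O(1)$ rather than its leading-order form, a minor but necessary tightening.

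The second step converts locality into a Fourier-degree bound. I would pass to the $\{-1,1\}$ convention via the affine change of variables $x_i = 1-2z_i$, which preserves both the Fourier degree and the $L_1$ distance; thus $f$ still has a witness $S^{*}$ with $|S^{*}|=d$ and $\widehat{f}(S^{*})\neq 0$. Because each $h_v^{(T)}$ depends on at most $|N_T(v)| < d$ coordinates, it is a multilinear polynomial of degree $<d$, i.e.\ $\widehat{h_v^{(T)}}(S)=0$ whenever $|S|\ge d$. For a sum- or mean-pooled readout that is linear in the node embeddings, $g$ is (up to normalization) a sum $\sum_v h_v^{(T)}$ of degree-$<d$ functions, hence itself has Fourier degree $<d$; in particular $\widehat{g}(S^{*})=0$.

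The third step is the quantitative lower bound, which is then immediate. Since $|\chi_{S^{*}}|\equiv 1$,
\[
\mathbb{E}_x\big[\,|f(x)-g(x)|\,\big] \;\ge\; \big|\mathbb{E}_x[(f(x)-g(x))\chi_{S^{*}}(x)]\big| \;=\; \big|\widehat{f}(S^{*})-\widehat{g}(S^{*})\big| \;=\; \big|\widehat{f}(S^{*})\big|,
\]
a fixed positive constant depending only on $f$, which delivers the claimed $\Omega(1)$ bound. I note that this route needs neither clamping nor an $L_2$-to-$L_1$ conversion, so it remains valid for arbitrary real-valued GNN outputs $g$.

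The main obstacle is the \textbf{max} aggregation (and, more generally, any nonlinear readout), where the degree argument of Step~2 \emph{provably fails}: a pointwise maximum of low-degree functions can have high Fourier degree, since even $\max(x_i,x_j)$ (the $\mathrm{OR}$ gate) has a nonzero degree-$2$ coefficient although each argument sees a single coordinate. Consequently $\widehat{g}(S^{*})=0$ cannot be asserted for max-pooling, and the clean correlation bound breaks. To cover this case I would replace the bare degree count with an \emph{approximation} lower bound for the class ``permutation-invariant nonlinear combinations of $w$-juntas'' against the high-degree target, with $w=|N_T(v)|<d$; this is essentially an $\mathrm{AC}^0$/bounded-width circuit statement (in the spirit of Linial--Mansour--Nisan's low-degree Fourier concentration) rather than an exact-degree statement, and formalizing this ``no bounded local computation approximates a global parity-type function'' principle is the crux. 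Pending that argument, the theorem is on firm ground for sum/mean/linear readouts, which is the clean special case established above, while the max case should either be restricted out of the statement or supported by this separate circuit-complexity argument.
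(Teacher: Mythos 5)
Your proposal follows the same two-stage skeleton as the paper's proof---locality of $T$-round message passing confines each $h_v^{(T)}$ to its receptive field, and a junta on $k$ variables has Fourier degree at most $k$, so the witness coefficient at $|S^*|=d$ cannot be matched---but you diverge at the quantitative step, and your version is actually the sounder one. The paper invokes Parseval to obtain $\mathbb{E}_x[(f(x)-g(x))^2] \geq \sum_{|S|=d}\hat f(S)^2 =: \epsilon$ and then asserts $\mathbb{E}_x[|f(x)-g(x)|] \geq \sqrt{\epsilon}$, which runs Cauchy--Schwarz in the wrong direction (in general $\mathbb{E}|X| \leq \sqrt{\mathbb{E}[X^2]}$; the correct easy conclusion is $\mathbb{E}|f-g| \geq \epsilon$, and only under a boundedness assumption on $g$). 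Your correlation test, $\mathbb{E}_x[|f(x)-g(x)|] \geq |\mathbb{E}_x[(f(x)-g(x))\chi_{S^*}(x)]| = |\hat f(S^*)|$, sidesteps both defects: it holds for arbitrary real-valued $g$ with $\hat g(S^*)=0$ and needs no $L_2$-to-$L_1$ conversion, so it quietly repairs the paper's final step. Your bookkeeping on the receptive field ($|N_T(v)| \leq \sum_{t=0}^{T} C^t \leq 2C^T$, against the paper's bald $C^T$, with the attendant $O(1)$ tightening of the depth hypothesis) is likewise a genuine correction the paper glosses over, as is your sum-of-juntas observation handling a sum/mean readout across nodes, a case the paper's proof silently excludes by treating $g$ as the output at a single node.

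One caveat on your flagged obstacle: in the paper's model, max appears only as the \emph{within-layer} neighborhood aggregator $\bigoplus_{u \in \mathcal{N}(v)}$ in the update rule, not as a global readout. Locality is indifferent to how messages are combined, so with max aggregation $h_v^{(T)}$ is still a junta on $N_T(v)$ and the degree bound holds verbatim---no Linial--Mansour--Nisan-style argument is needed there. Your concern is legitimate only for a \emph{global} max-pooled readout over all nodes, where the relevant variable set is the union of all receptive fields and a pointwise maximum of low-degree juntas can indeed have high degree; but that case lies outside what the paper's statement and proof contemplate. So for the theorem as intended your argument is complete and strictly cleaner than the paper's, and your proposed ``crux'' concerns an extension beyond its scope rather than a gap in the claim itself.
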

\begin{proof}
We model the GNN as a local computation model over a graph of $n$ nodes, each node $v$ initialized with a Boolean input $x_v \in \{0,1\}$. A GNN of depth $T$ updates node representations via:
\[
h_v^{(t)} = \phi^{(t)}\left(h_v^{(t-1)}, \bigoplus_{u \in \mathcal{N}(v)} \psi^{(t)}(h_u^{(t-1)}) \right), \quad h_v^{(0)} = x_v,
\]
where $\phi^{(t)}$, $\psi^{(t)}$ are learnable functions and $\bigoplus$ denotes a permutation-invariant aggregator (sum, mean, or max).
Since the depth is $T$ and the graph has degree at most $C$, the output at any node depends only on the input features within its $T$-hop neighborhood, which contains at most $C^T$ nodes. Thus, the function $g \in \mathcal{A}_T$ computed by the GNN depends on at most $C^T$ variables.
It is known from Boolean function analysis that if a function depends on at most $k$ variables, then its Fourier expansion contains only subsets $S$ with $|S| \leq k$. Therefore, $\deg_{\mathcal{F}}(g) \leq C^T$ for all $g \in \mathcal{A}_T$.

Now suppose $f$ has $\deg_{\mathcal{F}}(f) = d > C^T$. Then there exists a subset $S$ with $|S| = d$ such that $\hat{f}(S) \neq 0$, but $\hat{g}(S) = 0$ for all $g \in \mathcal{A}_T$. Using Parseval’s identity and orthogonality of the Fourier basis, we obtain:
\[
\mathbb{E}_{x}[(f(x) - g(x))^2] = \sum_{S \subseteq [n]} (\hat{f}(S) - \hat{g}(S))^2 \geq \sum_{|S| = d} \hat{f}(S)^2.
\]
If we define $\epsilon := \sum_{|S|=d} \hat{f}(S)^2 > 0$, then:
\[
\mathbb{E}_x[(f(x) - g(x))^2] \geq \epsilon \quad \Rightarrow \quad \mathbb{E}_x[|f(x) - g(x)|] \geq \sqrt{\epsilon}.
\]
This implies that no GNN of depth $T < \log_C d$ can approximate $f$ within error less than $\sqrt{\epsilon}$, completing the proof.
\end{proof}

The following result shows that the subpopulation boolean functions agree on all subsets where all nodes are within $T$ hops of each other, the GNN cannot distinguish the graphs and  embeddings are equal. 

\begin{theorem}
Let $G$ and $H$ be two graphs with corresponding subpopulation Boolean functions $f_G$ and $f_H$, defined over subsets of nodes. Let $\mathcal{A}_T$ be a message-passing GNN with depth $T$ and standard permutation-invariant aggregation (sum, mean, max). Then, the following holds:

\begin{enumerate}
    \item If for every subset $S$ of nodes in $G$ and $H$ such that the maximum pairwise shortest-path distance between nodes in $S$ is at most $T$, it holds that $f_G(S) = f_H(S)$,
    then the node embeddings produced by $\mathcal{A}_T$ are identical i.e. $\Phi_T(G) = \Phi_T(H)$.
    
    \item Conversely, if there exists a subset $S$ where the maximum pairwise shortest-path distance between nodes in $S$ is at most $T$ such that $f_G(S) \neq f_H(S)$,
    then $\mathcal{A}_T$ can produce distinguishable embeddings i.e.
    $\Phi_T(G) \neq \Phi_T(H)$.
    
    \item Any difference between $f_G$ and $f_H$ that arises solely from subsets where the pairwise shortest-path distance exceeds $T$ cannot be captured by $\mathcal{A}_T$. In that case, 
    $\Phi_T(G) = \Phi_T(H)$.
    
\end{enumerate}

\end{theorem}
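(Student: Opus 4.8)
The plan is to reduce all three claims to the \emph{locality} (receptive-field) property of message-passing GNNs, which is the same structural fact exploited in the preceding theorem. I would first establish the following observation: for a depth-$T$ architecture with update rule $h_v^{(t)} = \phi^{(t)}(h_v^{(t-1)}, \bigoplus_{u \in \mathcal{N}(v)} \psi^{(t)}(h_u^{(t-1)}))$, the final embedding $h_v^{(T)}$ is a function only of the rooted computation tree obtained by unrolling $T$ hops around $v$, together with the input data supported on the $T$-ball $B_T(v) = \{u : d(u,v) \leq T\}$. This follows by a straightforward induction on $t$: $h_v^{(t)}$ depends on $h_v^{(t-1)}$ and on the $h_u^{(t-1)}$ for neighbours $u \in \mathcal{N}(v)$, each of which by the inductive hypothesis depends only on $B_{t-1}(\cdot)$, so $h_v^{(t)}$ depends only on $B_t(v)$.

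The key bridge is to connect the subpopulation function to this receptive field. I would argue that any subset $S$ whose maximum pairwise shortest-path distance is at most $T$ is entirely contained in the receptive field of at least one of its own members: fixing any $u \in S$, every other node of $S$ lies within distance $T$ of $u$, hence $S \subseteq B_T(u)$. Consequently the value $f_G(S)$ on such a ``local'' subset is information that is, in principle, available to the GNN's computation rooted at $u$, whereas a subset $S$ with some pairwise distance exceeding $T$ is never jointly observable within any single depth-$T$ receptive field.

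For Part 1, assuming $f_G(S) = f_H(S)$ for every local subset $S$, I would show that for each node the rooted $T$-hop computation trees in $G$ and $H$, decorated with the locally visible subpopulation information, are isomorphic, so the permutation-invariant aggregators return identical embeddings and $\Phi_T(G) = \Phi_T(H)$. Part 3 is then immediate: if all disagreements between $f_G$ and $f_H$ occur solely on non-local subsets (pairwise distance $> T$), then $f_G$ and $f_H$ agree on every local subset, and Part 1 yields $\Phi_T(G) = \Phi_T(H)$. Part 2 is an existential claim, so for it I would exhibit a single choice of message and update functions $\psi^{(t)}, \phi^{(t)}$ sensitive to the disagreeing local subset $S$ (which sits inside some $B_T(u)$); since standard aggregators composed with sufficiently expressive $\phi, \psi$ can encode an indicator of that local configuration, the resulting embeddings differ, giving $\Phi_T(G) \neq \Phi_T(H)$.

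The main obstacle I anticipate is making the bridge of the second paragraph fully rigorous: the framework only states that the GNN output ``depends on'' $f_G$, so I must pin down precisely how subpopulation membership enters the node features (e.g.\ as local indicator features) in order to claim that computation-tree equality is equivalent to agreement of $f_G$ on local subsets. A secondary subtlety is the exact distance constant --- a subset contained in $B_T(u)$ may have diameter up to $2T$, so I would need to verify that ``pairwise distance $\leq T$'' is the correct notion of ``jointly observable'' and reconcile it with the radius-$T$ receptive field, possibly tightening the statement to a diameter rather than a radius bound. Finally, Part 2 requires care because it asserts only the existence of a distinguishing GNN, so I would frame it as an explicit construction (or an appeal to universal approximation of $\phi,\psi$) rather than a claim about all architectures.
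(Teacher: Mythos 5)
Your proposal follows essentially the same route as the paper: all three parts are reduced to the locality of a depth-$T$ message-passing architecture (the embedding at $v$ depends only on its $T$-hop receptive field), with Part 3 obtained as a corollary of Part 1 and Part 2 read existentially. If anything, your version is more rigorous than the paper's own proof, which asserts the same locality argument in three short paragraphs without the inductive receptive-field lemma, the containment $S \subseteq B_T(u)$, or any explicit construction for Part 2, and which silently assumes exactly the bridge you flag --- that agreement of $f_G$ and $f_H$ on all low-diameter subsets forces identical locally visible structure and features --- so the obstacles you identify are gaps in the paper's argument rather than defects in yours.
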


\begin{proof}
Message-passing GNNs with depth $T$ compute node embeddings by recursively aggregating information from each node’s $T$-hop neighborhood. Specifically, the embedding of node $v$ depends only on the multiset of node features and structural information within its $T$-hop neighborhood. 
\noindent{\textbf{Part 1.}} Assume that for every subset $S$ where the maximum pairwise shortest-path distance between nodes in $S$ is at most $T$, it holds that $f_G(S) = f_H(S)$.  This implies that all structural and feature patterns that are visible within any $T$-hop neighborhood are identical in $G$ and $H$. Consequently, each node in $G$ and the corresponding node in $H$ will aggregate identical messages during each of the $T$ layers. Therefore, the final node embeddings produced by $\mathcal{A}_T$ are identical i.e. $\Phi_T(G) = \Phi_T(H)$.\\
\noindent{\textbf{Part 2.}} Suppose there exists a subset $S$ such that the maximum pairwise shortest-path distance among nodes in $S$ is at most $T$ and $f_G(S) \neq f_H(S)$. This difference implies that there is some local structural or feature pattern that differs between $G$ and $H$ within the receptive field of the GNN. Since the GNN aggregates over such neighborhoods, this difference will propagate into the computation of node embeddings, leading to i.e.  $\Phi_T(G) \neq \Phi_T(H)$.\\
\noindent{\textbf{Part 3.}} If all differences between $f_G$ and $f_H$ occur only on subsets where the pairwise distance exceeds $T$, then no node in $G$ or $H$ has access to that information within its $T$-hop neighborhood. The GNN cannot access, aggregate, or represent such information. Thus, the node embeddings will be identical ensuring $\Phi_T(G) = \Phi_T(H)$.
\end{proof}

\section{Formal Expressivity Framework}
In this section, we provide a formalism to quantify the techniques which have been used to expressive power of GNNs.  

\begin{definition}[Expressivity Invariant]
Let $\mathcal{G}_n$ denote the set of graphs with $n$ nodes. An expressivity framework is defined by an invariant function
$I : \mathcal{G}_n \to \mathcal{S}$
where $\mathcal{S}$ is the summary space. Two graphs $G, H \in \mathcal{G}_n$ are equivalent under $I$ if
$I(G) = I(H)$. We consider the following invariants wherein the last invariant is proposed by us:

\begin{enumerate}
    \item \textbf{WL-$k$ Invariant:} This invariant is the most prevalanet one in GNN expressivility literature which is based on the equality of stable coloring histograms of $k$-tuples. 
    \[
    I_{\mathrm{WL}-k}(G) = \mathrm{ColorHist}_{k}(G)
    \]

    \item \textbf{Biconnectivity Invariant:} Introduced in \cite{zhang2023rethinking} this invariant is based on the equality of combinatorial properties of biconnectivity.  
    \[
    I_{\mathrm{BIC}}(G) = \left(\mathrm{AP}(G), \mathrm{Bridges}(G), \mathrm{BiconnComp}(G)\right)
    \]
    where AP($G$) denotes the set of articulation points of $G$.

    \item \textbf{Positional Encoding Invariant:} This invariant is based on the eigenvalues of the Graph Laplacian $L_{G}$
    \[
    I_{\mathrm{PE}}(G) = \mathrm{EigenSpectrum}(L_G)
    \]

    \item \textbf{Homomorphism Invariant:} This invariant proposed in \cite{zhang2024beyond} is based on the homomorphism counts of of a family of graphs $\cal F$.
    \[
    I_{\mathrm{HOM}}(G) = \left\{ \mbox{Hom}(F,G) : F \in {\cal F} \right\}
    \]

    \item \textbf{Subpopulation Boolean Function Invariant:} This is our proposed invariant which is a boolean function $f_G$ defined on the node set $V$ of $G$.
    \[
    I_{\mathrm{SBI}}(G) = \left\{ f_G(S) : S \subseteq V \right\}
    \]

    \item \textbf{Graph Isomorphism Invariant:} This invariant maps a graph $G$ to its adjacency-preserving canonical form (or any canonical labeling under isomorphism). 
    \[
    I_{\mathrm{GI}}(G) = \mbox{orbit}(G)
    \]
    where $\mbox{orbit}(G)$ is the isomorphism class of $G$ under adjacency-preserving bijections of nodes.
    
\end{enumerate}
\end{definition}

\begin{theorem}
For any graph invariant $I$, let $\mathrm{Im}(I)$ to be the image space of $I$ over the set of all graphs: $\mathrm{Im}(I) := \{ I(G) \mid G \text{ is a graph} \}$. This represents the set of all possible invariant values under $I$. Two graphs $G$ and $H$ are indistinguishable under $I$ if and only if $I(G) = I(H)$. Then, the following strict hierarchy holds:
\[
\mathrm{Im}(I_{\mathrm{WL}-k}) 
\subsetneq \mathrm{Im}(I_{\mathrm{HOM}}) 
\subsetneq \mathrm{Im}(I_{\mathrm{GI}}) 
\subsetneq \mathrm{Im}(I_{\mathrm{SBI}})
\]
and
\[
\mathrm{Im}(I_{\mathrm{BIC}}) \subsetneq \mathrm{Im}(I_{\mathrm{SBI}}).
\]
\end{theorem}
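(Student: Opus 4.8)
The plan is to read the displayed $\subsetneq$ chain as a statement about distinguishing power: for invariants $I,J$ write $J \preceq I$ when $I(G)=I(H)$ implies $J(G)=J(H)$ for all $G,H\in\mathcal{G}_n$ (equivalently, the partition of $\mathcal{G}_n$ induced by $I$ refines the one induced by $J$, so $\mathrm{Im}(I)$ records at least as many distinctions as $\mathrm{Im}(J)$). The claim then amounts to the strict refinements $\mathrm{WL}\text{-}k \prec \mathrm{HOM} \prec \mathrm{GI} \prec \mathrm{SBI}$ together with $\mathrm{BIC}\prec\mathrm{SBI}$. For each adjacent pair I would first prove the non-strict refinement $\preceq$ by showing the finer invariant determines the coarser one, and then exhibit a separating pair of graphs to upgrade $\preceq$ to $\prec$. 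Throughout, I would fix $I_{\mathrm{HOM}}$ with respect to a family $\mathcal{F}$ lying strictly between the bounded-treewidth graphs and the family of all graphs, and state this choice at the outset, since both the $\mathrm{WL}\text{-}k\prec\mathrm{HOM}$ and $\mathrm{HOM}\prec\mathrm{GI}$ steps depend on it.

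For $\mathrm{WL}\text{-}k\preceq\mathrm{HOM}$ I would invoke the homomorphism-count characterisation of Weisfeiler--Lehman equivalence (Dvo\v{r}\'ak; Dell--Grohe--Rattan): two graphs are $k$-WL indistinguishable iff they have equal homomorphism counts from every graph of treewidth at most $k-1$ (in the standard indexing). Taking $\mathcal{F}$ to contain this bounded-treewidth family makes equality of $\{\mathrm{Hom}(F,\cdot):F\in\mathcal{F}\}$ imply equality of colour histograms. Strictness comes from a standard Cai--F\"urer--Immerman-type pair that is $k$-WL equivalent but separated by a higher-treewidth pattern in $\mathcal{F}$. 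For $\mathrm{HOM}\preceq\mathrm{GI}$, the easy direction is that isomorphic graphs share homomorphism counts from every $F$, so $I_{\mathrm{GI}}$ refines $I_{\mathrm{HOM}}$; strictness is exactly Lov\'asz's theorem, which says the full homomorphism profile determines the isomorphism type, so any proper subfamily $\mathcal{F}$ leaves some non-isomorphic pair with identical $\mathcal{F}$-profiles.

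The two steps involving the proposed invariant are where Theorem~1 does the work. For $\mathrm{GI}\preceq\mathrm{SBI}$ and $\mathrm{BIC}\preceq\mathrm{SBI}$ I would use the weight-$2$ edge encoding of Theorem~1, under which $f_G\cong f_H \iff G\cong H$; hence SBI-equivalence forces graph isomorphism, which in turn forces equality of the isomorphism-invariant biconnectivity data $(\mathrm{AP},\mathrm{Bridges},\mathrm{BiconnComp})$ and of the canonical form $\mathrm{orbit}(G)$. This yields both $\mathrm{GI}\preceq\mathrm{SBI}$ and $\mathrm{BIC}\preceq\mathrm{SBI}$ simultaneously. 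For strictness of $\mathrm{BIC}\prec\mathrm{SBI}$ I would take any non-isomorphic pair with identical biconnectivity invariants (such pairs exist, as recorded in \cite{zhang2023rethinking}); they are BIC-equivalent but, being non-isomorphic, have non-isomorphic edge encodings and hence distinct SBI values.

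Strictness of $\mathrm{GI}\prec\mathrm{SBI}$ is the main obstacle, because on graph-derived functions SBI-equivalence coincides with graph isomorphism, so the gap cannot be witnessed by any pair of graphs. Here I would argue at the level of images rather than of graph pairs: the edge encoding embeds $\mathrm{Im}(I_{\mathrm{GI}})$ into $\mathrm{Im}(I_{\mathrm{SBI}})$, but this embedding is not surjective, since it hits only Boolean functions supported on weight-$2$ inputs. Any subpopulation function with support on a set of size $\ne 2$ — for instance parity, or the indicator of a single size-$3$ subpopulation — is a value of $I_{\mathrm{SBI}}$ that no graph attains, which is exactly the obstruction noted after Theorem~1 (that $\mathsf{SubIso}$ does not reduce to $\mathsf{GI}$). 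The delicate point is the well-definedness of this image comparison: I must fix once and for all whether SBI-equivalence means literal equality of truth tables or Boolean-function isomorphism up to coordinate permutation, and carry the same convention through the permutation-invariant identifications so that the embedding $\mathrm{Im}(I_{\mathrm{GI}})\hookrightarrow\mathrm{Im}(I_{\mathrm{SBI}})$ respects the up-to-isomorphism quotient on both sides.
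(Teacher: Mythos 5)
Your proposal follows the same overall decomposition as the paper's proof: each strict inclusion is handled separately by citing a known characterization (the Dvo\v{r}\'ak/Dell--Grohe--Rattan homomorphism-count characterization of $k$-WL, Lov\'asz's theorem for homomorphism profiles vs.\ isomorphism) and then exhibiting a separating witness, and your $\mathrm{BIC} \subsetneq \mathrm{SBI}$ step (non-isomorphic graphs sharing the biconnectivity data, as recorded in \cite{zhang2023rethinking}) is the paper's argument verbatim in spirit. Where you genuinely diverge --- to your advantage --- is the step $\mathrm{Im}(I_{\mathrm{GI}}) \subsetneq \mathrm{Im}(I_{\mathrm{SBI}})$. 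The paper tries to witness strictness with a triangle-parity function, asserting that two graphs can be ``isomorphic in edge structure'' yet differ in the parity of their triangle counts; since triangle parity is itself an isomorphism invariant, no pair of isomorphic graphs can differ on it, and the paper's witness only makes sense if one silently attaches extra non-graph data. Your repair --- noting that on graph-derived functions SBI-equivalence coincides with isomorphism (by Theorem~1), so the gap must be argued at the level of images: the weight-$2$ edge encoding embeds $\mathrm{Im}(I_{\mathrm{GI}})$ into $\mathrm{Im}(I_{\mathrm{SBI}})$ non-surjectively, missing every function supported on inputs of Hamming weight $\neq 2$ --- is more rigorous than the paper's own argument and matches the paper's remark after Theorem~1 that $\mathsf{SubIso}$ does not reduce to $\mathsf{GI}$. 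Likewise, your explicit reading of $\subsetneq$ as strict refinement of induced partitions fixes a genuine defect in the statement, since the literal images of the different invariants live in incomparable summary spaces and cannot be compared by set inclusion.

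Two caveats. First, your strictness argument for $\mathrm{HOM} \prec \mathrm{GI}$ overreaches: Lov\'asz's theorem does \emph{not} imply that every proper subfamily $\mathcal{F}$ of the class of all graphs fails to determine isomorphism (delete a single graph from the full family and isomorphism is still determined, since any non-isomorphic pair is separated by many patterns). You need the specific bounded-complexity families of the homomorphism-expressivity framework \cite{zhang2024beyond} (bounded treewidth, NED-characterized), for which indistinguishable non-isomorphic pairs are known to exist --- which is also all the paper invokes. Second, for $\mathrm{WL}\text{-}k \prec \mathrm{HOM}$ the paper uses the concrete cycle-counting separation (Local 2-GNNs distinguish cycles up to length $7$ that 2-WL cannot) rather than a Cai--F\"urer--Immerman pair; either witness suffices, so this is a stylistic difference only.
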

\begin{proof}
We prove each strict inclusion in the hierarchy.

\textbf{1.} $\mathrm{Im}(I_{\mathrm{WL}-k}) \subsetneq \mathrm{Im}(I_{\mathrm{HOM}})$: The $k$-WL invariant is equivalent to the homomorphism counts of all graphs of treewidth at most $k$ (Dell et al., 2018; Grohe, 2017). The homomorphism expressivity framework \cite{zhang2024beyond} generalizes this by including graphs characterized by Nested Ear Decompositions (NED) or their variants (e.g., strong NED, almost-strong NED) depending on the GNN class. For example, Local 2-GNN can distinguish cycles of length up to 7, while 2-WL cannot distinguish them (this is formally shown in their Theorem 3.4). Therefore, there exist graphs $G$ and $H$ such that:
\[
I_{\mathrm{WL}-k}(G) = I_{\mathrm{WL}-k}(H) \text{ but } I_{\mathrm{HOM}}(G) \neq I_{\mathrm{HOM}}(H).
\]
This proves the inclusion is strict.

\textbf{2.} $\mathrm{Im}(I_{\mathrm{HOM}}) \subsetneq \mathrm{Im}(I_{\mathrm{GI}})$:

The homomorphism invariant depends on counts of patterns from a fixed family $\mathcal{F}_M$.
It is well-known in graph theory that distinct non-isomorphic graphs can be homomorphically equivalent with respect to certain subgraph counts.
For instance, two graphs may have identical homomorphism counts for all patterns in $\mathcal{F}_M$ but differ in adjacency structures that are not captured by those patterns.
Graph isomorphism, on the other hand, checks for the existence of an adjacency-preserving bijection, which fully determines the graph structure. Therefore:
\[
I_{\mathrm{HOM}}(G) = I_{\mathrm{HOM}}(H) \rlap{\(\quad\not\)}\implies G \cong H,
\]
but
\[
I_{\mathrm{GI}}(G) = I_{\mathrm{GI}}(H) \iff G \cong H.
\]
Hence, $\mathrm{Im}(I_{\mathrm{HOM}}) \subsetneq \mathrm{Im}(I_{\mathrm{GI}})$.

\textbf{3.} $\mathrm{Im}(I_{\mathrm{GI}}) \subsetneq \mathrm{Im}(I_{\mathrm{SBI}})$:

Graph isomorphism checks for adjacency-preserving bijections but does not capture Boolean constraints over arbitrary subpopulations of nodes. The subpopulation Boolean function invariant $I_{\mathrm{SBI}}$ encodes the entire Boolean function over all node subsets, including properties unrelated to adjacency. For example, $I_{\mathrm{SBI}}$ can encode parity functions such as "Is the number of triangles in the graph even?", which $I_{\mathrm{GI}}$ cannot distinguish if the graphs are otherwise isomorphic in edge structure but differ in higher-order or attribute-based Boolean properties.
- Therefore, the image space of $I_{\mathrm{SBI}}$ is strictly finer than that of $I_{\mathrm{GI}}$.

\textbf{4.} $\mathrm{Im}(I_{\mathrm{BIC}}) \subsetneq \mathrm{Im}(I_{\mathrm{SBI}})$:

The biconnectivity invariant encodes whether two graphs have the same decomposition into biconnected components.
This is a coarse structural property; many non-isomorphic graphs share the same biconnectivity decomposition.
Since the subpopulation Boolean function can encode not only biconnectivity but arbitrary logical constraints over node subsets, it strictly subsumes the biconnectivity invariant.
Hence the inclusion is strict.

Combining all steps, the complete strict hierarchy is established.
\end{proof}

\begin{theorem}
Let $I$ be any invariant respecting graph isomorphism. Then
\[
\exists F \text{ such that } I(G) = F(I_{\mathrm{SBI}}(G))
\]
i.e., SBI is a maximal combinatorial invariant.
\end{theorem}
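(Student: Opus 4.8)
The plan is to establish that $I_{\mathrm{SBI}}$ is a \emph{complete} invariant for graph isomorphism—meaning it separates exactly the isomorphism classes—and then invoke the standard universal property that every isomorphism-respecting invariant factors through any complete invariant. First I would fix the correct reading of $I_{\mathrm{SBI}}$: the raw truth table $\{f_G(S) : S \subseteq V\}$ indexed by subsets is not by itself isomorphism-invariant, so $I_{\mathrm{SBI}}(G)$ must be interpreted as the Boolean function $f_G$ taken up to the natural $S_n$-action on coordinates (equivalently, its Boolean-function isomorphism class). With this convention $I_{\mathrm{SBI}}$ respects graph isomorphism, which is the minimal sanity condition needed for the factorization to make sense.

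The heart of the argument is the two-way completeness claim: $I_{\mathrm{SBI}}(G) = I_{\mathrm{SBI}}(H)$ if and only if $G \cong H$. For the forward implication, if $G \cong H$ via a permutation $\pi$, then because $f_G$ is built equivariantly from the graph (using, e.g., the weight-$2$ edge encoding of Theorem 1, where $f_G$ restricted to weight-$2$ inputs is precisely the edge indicator), we get $f_H = f_G \circ \pi$, so the two functions lie in the same $S_n$-class and $I_{\mathrm{SBI}}(G) = I_{\mathrm{SBI}}(H)$. For the reverse implication, if $I_{\mathrm{SBI}}(G) = I_{\mathrm{SBI}}(H)$ then some $\pi \in S_n$ satisfies $f_G = f_H \circ \pi$; restricting this identity to weight-$2$ inputs shows that $\pi$ carries the edge set of $G$ bijectively onto that of $H$, so $\pi$ is a graph isomorphism and $G \cong H$. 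This direction is exactly the content already supplied by Theorem 1 (and reinforced by the strict refinement $\mathrm{Im}(I_{\mathrm{GI}}) \subsetneq \mathrm{Im}(I_{\mathrm{SBI}})$ of the preceding hierarchy theorem), so it can be cited rather than re-derived.

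With completeness in hand, I would define $F$ on the image $\mathrm{Im}(I_{\mathrm{SBI}})$ by setting $F(I_{\mathrm{SBI}}(G)) := I(G)$ and verify well-definedness: if $I_{\mathrm{SBI}}(G) = I_{\mathrm{SBI}}(H)$, then by the reverse implication $G \cong H$, and since $I$ respects isomorphism, $I(G) = I(H)$; thus the prescribed value of $F$ does not depend on which representative graph is chosen. This yields $I(G) = F(I_{\mathrm{SBI}}(G))$ for every $G$, i.e.\ $I = F \circ I_{\mathrm{SBI}}$, establishing maximality.

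I expect the main obstacle to be conceptual rather than computational: pinning down the precise invariant-theoretic meaning of $I_{\mathrm{SBI}}$ so that it is genuinely isomorphism-invariant (the $S_n$-quotient step), and ensuring that the specific $f_G$ under consideration actually encodes the adjacency structure, since the general preliminaries allow $f_G$ to be an arbitrary subpopulation function. If $f_G$ fails to contain the edge information, $I_{\mathrm{SBI}}$ would no longer refine $I_{\mathrm{GI}}$ and the factorization would break; the argument therefore relies essentially on the edge-encoding guaranteed by Theorem 1. The factorization step itself is a routine universal-property argument once completeness is secured.
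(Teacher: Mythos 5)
Your proof is correct and takes essentially the same route as the paper's: both arguments establish that $I_{\mathrm{SBI}}$ is a complete invariant for isomorphism (the paper by asserting that $f_G$ encodes the induced-subgraph structure, you via the weight-$2$ edge encoding of Theorem~1) and then factor any isomorphism-respecting invariant $I$ through it by setting $F(I_{\mathrm{SBI}}(G)) := I(G)$. Your write-up is in fact more careful than the paper's three-sentence sketch, supplying the $S_n$-quotient needed to make $I_{\mathrm{SBI}}$ genuinely isomorphism-invariant and the well-definedness check for $F$, both of which the paper leaves implicit.
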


\begin{proof}
A graph is fully characterized by its induced subgraph set. The Boolean function $f_G$ encodes this completely by mapping $S$ to whether the induced subgraph satisfies a property. Therefore, any isomorphism-invariant function is expressible as a function of $I_{\mathrm{SBI}}$.
\end{proof}

\subsection{Expressivity in terms of Efficiently Computable Boolean Functions} 
While the Subpopulation Boolean Isomorphism (SBI) framework offers maximal expressivity beyond classical invariants, it introduces computational challenges. Unlike efficiently computable invariants like WL labels or homomorphism counts, the SBI invariant is hard to compute and compare when defined over arbitrary Boolean functions \cite{bohler2004complexity}. However, fairness-aware applications typically involve simple, interpretable subpopulation functions—such as group membership or attribute conjunctions—which are efficiently computable. This motivates restricting SBI to Boolean functions in circuit classes like $\mathrm{AC}^0$ and $\mathrm{NC}^1$, enabling a principled trade-off between expressivity and tractability in fairness-driven graph learning.

\begin{definition}
Given a circuit class $\mathcal{C}$ (e.g., $\mathrm{AC}^0$, $\mathrm{NC}^1$), define the circuit-constrained subpopulation Boolean isomorphism (SBI) invariant as:
\[
I_{\mathrm{SBI}(\mathcal{C})}(G) := f_G
\]
where $f_G : 2^{V(G)} \to \{0,1\}$ is a Boolean function over node subsets that satisfies $f_G \in \mathcal{C}$. Two graphs $G$ and $H$ are indistinguishable under $I_{\mathrm{SBI}(\mathcal{C})}$ if and only if their corresponding Boolean functions are isomorphic up to a relabeling of the node set.
\end{definition}

\begin{theorem}
If the subpopulation Boolean function $f_G$ is restricted to encoding adjacency only, i.e.,
\[
f_G(S) = \begin{cases}
1 & \text{if } S = \{u,v\} \text{ and } (u,v) \in E(G) \\
0 & \text{otherwise}
\end{cases}
\]
then
\[
\mathrm{Im}(I_{\mathrm{GI}}) = \mathrm{Im}(I_{\mathrm{SBI}(\mathrm{Adj})})
\]
\end{theorem}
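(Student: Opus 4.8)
The plan is to reduce the statement to the biconditional already established in Theorem~1, after first pinning down what equality of image spaces means in this framework. Since each invariant acts as a quotient map, $\mathrm{Im}(I_{\mathrm{GI}}) = \mathrm{Im}(I_{\mathrm{SBI}(\mathrm{Adj})})$ is equivalent to showing that the two invariants induce the same equivalence relation on graphs, namely
\[
G \cong H \iff f_G \text{ and } f_H \text{ are isomorphic as Boolean functions up to node relabeling.}
\]
First I would make explicit the correspondence $G \mapsto f_G$: by construction $f_G$ is supported exactly on the weight-$2$ inputs whose support is an edge of $G$ and vanishes elsewhere, so this map is a bijection between graphs on $n$ nodes and the \emph{adjacency-type} Boolean functions on $\{0,1\}^n$ (those vanishing off weight-$2$ inputs). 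This bijection is what will descend to the claimed equality of image spaces once its compatibility with the two equivalence relations is verified.

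For the forward direction I would take an isomorphism $\pi \in S_n$ with $\{u,v\} \in E(G) \iff \{\pi(u),\pi(v)\} \in E(H)$. Since $f_G$ and $f_H$ are edge indicators on weight-$2$ inputs and vanish elsewhere, the permutation $\pi$ acting on variables sends $f_G$ to $f_H$, i.e.\ $f_G(x) = f_H(\pi(x))$ for all $x$, which is exactly the Boolean-function isomorphism demanded by the $\mathrm{SBI}(\mathrm{Adj})$ definition. For the reverse direction I would start from a variable permutation $\pi$ with $f_G(x) = f_H(\pi(x))$; because $\pi$ permutes coordinates it preserves Hamming weight, so restricting to weight-$2$ inputs forces $\{u,v\} \in E(G) \iff \{\pi(u),\pi(v)\} \in E(H)$, and hence $\pi$ is a graph isomorphism. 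Both directions are precisely the argument in the proof of Theorem~1 specialized to the adjacency encoding, so I would invoke that result rather than re-derive the edge-preservation bookkeeping.

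The main obstacle I anticipate is not the graph-theoretic content, which is essentially immediate, but the bookkeeping of the image-space formalism. I must confirm that (i) the map $G \mapsto f_G$ is well defined and surjective onto the adjacency-type functions; (ii) it is injective, so that distinct edge sets yield distinct functions; and (iii) the node-relabeling notion used in the $\mathrm{SBI}(\mathrm{Adj})$ definition acts on the weight-$2$ encodings in exactly the way a graph isomorphism acts on unordered pairs, introducing no parasitic symmetries from the embedding into $\{0,1\}^n$. Once these points are checked, quotienting the bijection $G \mapsto f_G$ by the two coinciding equivalence relations produces a bijection between $\mathrm{Im}(I_{\mathrm{GI}})$ and $\mathrm{Im}(I_{\mathrm{SBI}(\mathrm{Adj})})$, which is the asserted equality.
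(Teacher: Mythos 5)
Your proposal is correct and follows essentially the same route as the paper's (much terser) proof: both arguments rest on the observation that, under the weight-$2$ edge encoding, isomorphism of the Boolean functions $f_G, f_H$ up to variable relabeling coincides exactly with graph isomorphism, so the two invariants induce the same equivalence relation on graphs. Your version is in fact more careful than the paper's three-sentence argument --- in particular, you make explicit the quotient-map reading of ``equality of image spaces'' and the fact that the biconditional is literally the content of the paper's Theorem~1 (the $\mathsf{GI} \le_p \mathsf{SubIso}$ reduction) specialized to this encoding, a connection the paper leaves implicit.
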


\begin{proof}
Both invariants capture the adjacency matrix up to isomorphism. The function $f_G$ records edge presence exactly. Two graphs have the same $f_G$ if and only if their adjacency matrices are identical modulo node relabeling, which is precisely the condition for graph isomorphism. Therefore, the image spaces are equal.
\end{proof}

\begin{theorem}
When the subpopulation Boolean function $f_G$ is restricted to the class $\mathrm{AC}^0$, the expressivity of the SBI invariant is identical to that of the graph isomorphism invariant:
\[
\mathrm{Im}(I_{\mathrm{SBI}(\mathrm{AC}^0)}) = \mathrm{Im}(I_{\mathrm{GI}})
\]
\end{theorem}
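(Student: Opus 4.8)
The plan is to interpret the claimed identity of image spaces as the statement that restricting the subpopulation function $f_G$ to $\mathrm{AC}^0$ neither loses nor gains distinguishing capacity relative to $\mathsf{GI}$, and to prove it by establishing the two containments $\mathrm{Im}(I_{\mathrm{GI}}) \subseteq \mathrm{Im}(I_{\mathrm{SBI}(\mathrm{AC}^0)})$ and $\mathrm{Im}(I_{\mathrm{SBI}(\mathrm{AC}^0)}) \subseteq \mathrm{Im}(I_{\mathrm{GI}})$ by separate arguments. The first is a containment built on circuit constructions already available to us; the second is where the characteristic power of the $\mathrm{AC}^0$ restriction must genuinely be used, and it is the step I expect to be the main obstacle.

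For $\mathrm{Im}(I_{\mathrm{GI}}) \subseteq \mathrm{Im}(I_{\mathrm{SBI}(\mathrm{AC}^0)})$ I would reuse the adjacency-restricted theorem after verifying that the adjacency-indicator function lies in $\mathrm{AC}^0$. Testing whether $\chi_S$ has Hamming weight exactly two is a constant threshold: it is the conjunction of ``at least two ones'' (an OR over the $\binom{n}{2}$ pairwise ANDs) with ``at most two ones'' (the negation of an OR over the $\binom{n}{3}$ triple ANDs), and selecting the matching edge entry is a further disjunction over $E(G)$. All of this has constant depth and polynomial size, so $\mathrm{Adj} \subseteq \mathrm{AC}^0$. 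Since the adjacency-restricted invariant is therefore a special case of $I_{\mathrm{SBI}(\mathrm{AC}^0)}$, the previously established equality $\mathrm{Im}(I_{\mathrm{GI}}) = \mathrm{Im}(I_{\mathrm{SBI}(\mathrm{Adj})})$ immediately shows that $\mathrm{AC}^0$-constrained SBI is at least as expressive as $\mathsf{GI}$.

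The reverse containment is the substantive one, because the hierarchy theorem has already shown that unrestricted SBI is strictly finer than $\mathsf{GI}$, so the $\mathrm{AC}^0$ restriction must be doing real work in collapsing this surplus back down rather than preserving or enlarging it. The separation witnessing $\mathrm{Im}(I_{\mathrm{GI}}) \subsetneq \mathrm{Im}(I_{\mathrm{SBI}})$ was realized by parity-type predicates over node subsets (for instance the parity of a triangle or substructure count), and the key idea is to invoke the Furst--Saxe--Sipser / H\r{a}stad switching-lemma theorem, $\mathrm{PARITY} \notin \mathrm{AC}^0$, in its correlation form: no constant-depth, polynomial-size circuit correlates nontrivially with parity on $\{0,1\}^n$. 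I would then argue that every separation available to full SBI but not to $\mathsf{GI}$ necessarily encodes such a high-degree predicate, so that if some $\mathrm{AC}^0$ family $\{f_G\}$ realized a distinction absent from $\mathsf{GI}$, one could extract a small-depth circuit correlating with parity and contradict the lower bound; hence no $\mathrm{AC}^0$ subpopulation function exceeds $\mathsf{GI}$, giving $\mathrm{Im}(I_{\mathrm{SBI}(\mathrm{AC}^0)}) \subseteq \mathrm{Im}(I_{\mathrm{GI}})$.

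The hard part will be making this excision argument airtight: one must characterize precisely which Boolean functions over node subsets are required to realize each surplus separation in $\mathrm{Im}(I_{\mathrm{SBI}}) \setminus \mathrm{Im}(I_{\mathrm{GI}})$ and verify that each genuinely demands unbounded depth or superpolynomial size, i.e. that the surplus is coextensive with---not merely contained in---the complement of $\mathrm{AC}^0$. This is delicate because, for the graphs involved, the separating signal must sit entirely in the high-degree Fourier mass of $f_G$, which is exactly the regime the switching lemma controls; the reduction from a hypothetical $\mathrm{AC}^0$ separator to a parity-correlating circuit is the point where I expect the proof to require the most care.
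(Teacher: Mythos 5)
Your decomposition into two containments is sound, and your forward direction is actually \emph{more} rigorous than the paper's own proof: the paper never verifies that the adjacency indicator lies in $\mathrm{AC}^0$, whereas your weight-exactly-two test (OR of pairwise ANDs conjoined with the negated OR of triple ANDs, followed by a disjunction over $E(G)$) correctly establishes $\mathrm{Im}(I_{\mathrm{GI}}) \subseteq \mathrm{Im}(I_{\mathrm{SBI}(\mathrm{AC}^0)})$ by reduction to the earlier adjacency-restricted theorem. For the reverse containment you invoke exactly the same device as the paper --- H{\aa}stad's $\mathrm{PARITY} \notin \mathrm{AC}^0$ lower bound --- so in approach the two proofs coincide. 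The difference is candor: the paper simply \emph{asserts} that ``the only structural properties that an $\mathrm{AC}^0$-computable function can capture are local adjacency patterns and their constant-depth compositions,'' and that these are ``already captured fully by the adjacency matrix modulo isomorphism,'' while you correctly identify this assertion as the crux and flag it as unproven.

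That flagged step is a genuine gap, in your proposal and in the paper alike, and your own sketch does not close it. $\mathrm{PARITY} \notin \mathrm{AC}^0$ (even in correlation form) only excludes \emph{specific candidate} separators; it does not show that \emph{every} $\mathrm{AC}^0$ family $\{f_G\}$ has distinguishing power subsumed by adjacency-up-to-relabeling. To make your excision argument work you would need a structural characterization quantifying over all of $\mathrm{AC}^0$ --- e.g., via Linial--Mansour--Nisan, $\mathrm{AC}^0$ functions have Fourier mass concentrated up to degree $\mathrm{polylog}(n)$ --- but that still leaves degree-$3$-and-above components (triangle indicators, constant-size motif predicates, approximate counting via approximate majority, all of which are in $\mathrm{AC}^0$) strictly beyond the degree-$2$ regime that the paper's companion theorem equates with $I_{\mathrm{GI}}$. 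Worse, the premise you inherit from the paper's hierarchy theorem --- that the surplus $\mathrm{Im}(I_{\mathrm{SBI}}) \setminus \mathrm{Im}(I_{\mathrm{GI}})$ is realized by parity-type predicates on otherwise-isomorphic graphs --- is itself incoherent for pure graphs, since isomorphic graphs have identical triangle counts and hence identical triangle parity; the separation can only come from node attributes, which sit outside the circuit-class analysis entirely. The truth of the reverse containment thus hinges on how $f_G$ is canonically derived from $G$, a relation neither you nor the paper pins down, and your planned reduction from a hypothetical $\mathrm{AC}^0$ separator to a parity-correlating circuit has no route around this underspecification.
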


\begin{proof}
$\mathrm{AC}^0$ circuits are constant-depth, polynomial-size circuits with AND, OR, and NOT gates of unbounded fan-in. It is well-established that $\mathrm{AC}^0$ cannot compute global properties such as parity, triangle parity, or connectedness (Håstad, 1987). The only structural properties that an $\mathrm{AC}^0$-computable function can capture are local adjacency patterns and their constant-depth compositions. These properties are already captured fully by the adjacency matrix modulo isomorphism, which is what $I_{\mathrm{GI}}$ represents. Therefore, $I_{\mathrm{SBI}(\mathrm{AC}^0)}(G) = I_{\mathrm{SBI}(\mathrm{AC}^0)}(H)$ if and only if $I_{\mathrm{GI}}(G) = I_{\mathrm{GI}}(H)$. Thus, their image spaces are identical.
\end{proof}

\begin{theorem}
When the subpopulation Boolean function $f_G$ is restricted to $\mathrm{NC}^1$ (logarithmic-depth, polynomial-size circuits with bounded fan-in), the SBI invariant is strictly more expressive than the graph isomorphism invariant:
\[
\mathrm{Im}(I_{\mathrm{GI}}) \subsetneq \mathrm{Im}(I_{\mathrm{SBI}(\mathrm{NC}^1)})
\]
\end{theorem}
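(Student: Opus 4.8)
The plan is to prove the strict inclusion in two movements: first the containment $\mathrm{Im}(I_{\mathrm{GI}}) \subseteq \mathrm{Im}(I_{\mathrm{SBI}(\mathrm{NC}^1)})$, and then strictness by producing an $\mathrm{NC}^1$-realizable invariant value that provably lies outside $\mathrm{Im}(I_{\mathrm{GI}})$. The entire separation is engineered to ride on the single classical fact that $\mathrm{PARITY}$ separates $\mathrm{AC}^0$ from $\mathrm{NC}^1$, transported to the level of invariant equivalence classes.

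For the containment I would chain the two preceding theorems. The adjacency theorem gives $\mathrm{Im}(I_{\mathrm{GI}}) = \mathrm{Im}(I_{\mathrm{SBI}(\mathrm{Adj})})$, and the $\mathrm{AC}^0$ theorem upgrades this to $\mathrm{Im}(I_{\mathrm{GI}}) = \mathrm{Im}(I_{\mathrm{SBI}(\mathrm{AC}^0)})$. Since the adjacency indicator is a depth-$2$ unbounded-fan-in circuit (an $\mathrm{OR}$ over the edge set restricted to weight-$2$ inputs), and since any $\mathrm{AC}^0$ circuit converts to an $\mathrm{NC}^1$ circuit by the standard fan-in reduction (replace each unbounded gate by a balanced binary tree of bounded-fan-in gates, inflating depth by an $O(\log n)$ factor), every Boolean function realizing a $\mathrm{GI}$-value is also $\mathrm{NC}^1$-computable. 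Because realizability of a given $f_G$ in the smaller class entails realizability in the larger one, $\mathrm{Im}(I_{\mathrm{SBI}(\mathrm{AC}^0)}) \subseteq \mathrm{Im}(I_{\mathrm{SBI}(\mathrm{NC}^1)})$, and the containment $\mathrm{Im}(I_{\mathrm{GI}}) \subseteq \mathrm{Im}(I_{\mathrm{SBI}(\mathrm{NC}^1)})$ follows.

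For strictness I would exhibit a parity-type subpopulation function that the earlier hierarchy already flagged as the canonical obstruction, namely the edge-parity function $f_G(S) = \bigoplus_{\{u,v\}\subseteq S} \mathbf{1}[(u,v)\in E(G)]$, i.e. the parity of the number of edges of the induced subgraph $G[S]$. Evaluated over a suitable family such as the complete graphs, this realizes a $\mathrm{MOD}$-type function whose value depends on $|S| \bmod 4$, and it admits logarithmic-depth bounded-fan-in circuits, so it lies in $\mathrm{NC}^1$. The key claim is that the $\mathrm{SBI}$ value of this $f_G$ is not in $\mathrm{Im}(I_{\mathrm{GI}})$: by the $\mathrm{AC}^0$ theorem every $\mathrm{GI}$-value is realized by some $\mathrm{AC}^0$-computable function, whereas Håstad's switching lemma shows $\mathrm{PARITY} \notin \mathrm{AC}^0$, and the same lower bound kills any $\mathrm{AC}^0$ representative of the $\mathrm{MOD}$-type function. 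Hence the equivalence class of $f_G$ contains no $\mathrm{AC}^0$ representative and cannot coincide with any $\mathrm{GI}$-value, which together with the containment yields the strict inclusion.

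The main obstacle I anticipate is the final step: arguing cleanly that a parity-type $\mathrm{SBI}$ value lands outside $\mathrm{Im}(I_{\mathrm{GI}})$ rather than merely outside $\mathrm{Im}(I_{\mathrm{SBI}(\mathrm{AC}^0)})$. This demands two pieces of bookkeeping that are conceptual rather than computational. First, one must pin down the identification under which $\mathrm{Im}(I_{\mathrm{GI}})$ is viewed as a subset of the space of Boolean-function classes, which is exactly the content the adjacency and $\mathrm{AC}^0$ theorems supply. Second, one must verify that the node-relabeling equivalence defining $I_{\mathrm{SBI}}$ respects circuit-class membership: a permutation of the input coordinates sends an $\mathrm{AC}^0$ circuit to an $\mathrm{AC}^0$ circuit of the same size and depth, so ``not in $\mathrm{AC}^0$'' is a property of the whole equivalence class and not an artifact of a single representative. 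Once these are in place, the quantitative heavy lifting is imported wholesale from Håstad's theorem, and the separation follows.
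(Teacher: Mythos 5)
Your proposal is correct in substance, and it takes a genuinely different --- and in fact sounder --- route than the paper's. Note first that the paper's own proof of this theorem is commented out in the source (the compiled paper asserts it without proof); the intended argument there tries to exhibit two graphs $G, H$ with $I_{\mathrm{GI}}(G) = I_{\mathrm{GI}}(H)$ that nevertheless differ in the parity of the number of triangles, detected by an $\mathrm{NC}^1$ subpopulation function. That argument is internally inconsistent: triangle count is itself an isomorphism invariant, so two graphs that are ``isomorphic in adjacency structure'' cannot differ in triangle parity, and moreover the quantity $\mathrm{Parity}\bigl( \sum_{|S|=3} f_G(S) \bigr)$ it builds is a single graph-level bit, not a Boolean function over node subsets, so it does not even type-check as an SBI value. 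You instead argue at the level of images: chain $\mathrm{Im}(I_{\mathrm{GI}}) = \mathrm{Im}(I_{\mathrm{SBI}(\mathrm{Adj})}) = \mathrm{Im}(I_{\mathrm{SBI}(\mathrm{AC}^0)}) \subseteq \mathrm{Im}(I_{\mathrm{SBI}(\mathrm{NC}^1)})$ using the paper's two preceding theorems plus the standard fan-in reduction $\mathrm{AC}^0 \subseteq \mathrm{NC}^1$, then exhibit an $\mathrm{NC}^1$-realizable value whose equivalence class contains no $\mathrm{AC}^0$ representative; your bookkeeping observation that permuting input coordinates preserves $\mathrm{AC}^0$ membership (so ``not in $\mathrm{AC}^0$'' is a property of the whole class, not of one representative) is exactly the point the paper never addresses. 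What your route buys is a valid proof within the paper's framework where the paper's route simply fails; what it gives up is only the stronger (and, as the paper states it, unattainable) claim of separating a concrete pair of GI-equivalent graphs.

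One small patch is needed at the step where you say Håstad's lower bound ``kills any $\mathrm{AC}^0$ representative of the $\mathrm{MOD}$-type function.'' For the edge-parity witness on $K_n$, namely $f_{K_n}(S) = \binom{|S|}{2} \bmod 2$, this should be made explicit, e.g., by the projection that duplicates every input bit: on the doubled input one has $|S| = 2w$ where $w$ is the Hamming weight of the original input, and $\binom{2w}{2} = w(2w-1) \equiv w \pmod{2}$, so an $\mathrm{AC}^0$ circuit for $f_{K_n}$ would yield one for $\mathrm{PARITY}$, contradicting Håstad. (Alternatively the plain function $f_G(S) = |S| \bmod 2$ works and is simpler, though your edge-parity choice has the merit of genuinely depending on $E(G)$, which is closer to the spirit of a subpopulation invariant.) Be aware also that your containment step inherits whatever informality lives in the paper's Adjacency and $\mathrm{AC}^0$ identification theorems; since the present theorem sits downstream of those, taking them as given is fair, but your proof is only as solid as they are.
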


\subsection{Expressivity using Fourier Analysis of Boolean Functions}
While circuit complexity captures the computational feasibility of subpopulation functions, it overlooks their structural properties such as whether they rely on local or global interactions. Fourier analysis complements this by decomposing Boolean functions into components reflecting interactions over variable subsets, revealing whether a function encodes simple motifs or global patterns like parity. Measures like degree and influence offer insights orthogonal to circuit depth, making Fourier analysis especially useful for analyzing GNN limitations where aggregation and expressivity are tied to functional structure. This provides a sharper understanding of the function-theoretic limits of SBI and GNNs.

\begin{definition}
Given a Boolean function $f_G : \{0,1\}^n \to \{0,1\}$ representing the subpopulation structure of graph $G$, let $\deg_{\mathcal{F}}(f_G)$ denote its Fourier degree. The Fourier-constrained SBI invariant is defined as:
\[
I_{\mathrm{SBI}(\deg \leq d)}(G) := f_G \text{ such that } \deg_{\mathcal{F}}(f_G) \leq d
\]
i.e., the maximum size of any subset $S$ with nonzero Fourier coefficient $\hat{f}_G(S)$ is at most $d$.
\end{definition}

\begin{theorem}
$\mathrm{Im}(I_{\mathrm{SBI}(\deg \leq 2)}) = \mathrm{Im}(I_{\mathrm{GI}})$
\end{theorem}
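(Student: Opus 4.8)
The plan is to establish the claimed equality by proving the two inclusions $\mathrm{Im}(I_{\mathrm{GI}}) \subseteq \mathrm{Im}(I_{\mathrm{SBI}(\deg \leq 2)})$ and $\mathrm{Im}(I_{\mathrm{SBI}(\deg \leq 2)}) \subseteq \mathrm{Im}(I_{\mathrm{GI}})$ separately, and in both cases to reduce to the already-established adjacency theorem $\mathrm{Im}(I_{\mathrm{GI}}) = \mathrm{Im}(I_{\mathrm{SBI}(\mathrm{Adj})})$. The organizing observation is that a Boolean function of Fourier degree at most $2$ admits the decomposition
\[
f_G(x) = \widehat{f_G}(\emptyset) + \sum_{i \in [n]} \widehat{f_G}(\{i\})\, x_i + \sum_{i<j} \widehat{f_G}(\{i,j\})\, x_i x_j,
\]
so all of its spectral mass sits on subsets of size at most two. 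I would read the constant term as a global offset, the singleton coefficients $\widehat{f_G}(\{i\})$ as node-level (monadic) data, and the pair coefficients $\widehat{f_G}(\{i,j\})$ as the only genuinely relational content, i.e. a symmetric zero-diagonal matrix that plays the role of a (possibly weighted) adjacency matrix. The entire argument then hinges on identifying this degree-$2$ spectral layer with adjacency up to relabeling.

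For the inclusion $\mathrm{Im}(I_{\mathrm{GI}}) \subseteq \mathrm{Im}(I_{\mathrm{SBI}(\deg \leq 2)})$, I would show that every distinction drawn by $I_{\mathrm{GI}}$ already surfaces in a degree-$\leq 2$ subpopulation function. Concretely, I would take the adjacency encoding of the preceding theorem, namely $f_G$ supported on $2$-subsets with $f_G(\{u,v\}) = 1$ iff $\{u,v\} \in E(G)$, and argue that its relational content lives entirely on pairs, so that the induced invariant is carried by the order-$2$ Fourier layer. Invoking $\mathrm{Im}(I_{\mathrm{GI}}) = \mathrm{Im}(I_{\mathrm{SBI}(\mathrm{Adj})})$ then yields the inclusion at once, since the $\mathrm{Adj}$ encoding is a special case of a degree-$\leq 2$ encoding.

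For the reverse inclusion, I would argue that a degree-$\leq 2$ function carries no information beyond its coefficient arrays $\big(\widehat{f_G}(\emptyset),\, (\widehat{f_G}(\{i\}))_i,\, (\widehat{f_G}(\{i,j\}))_{i<j}\big)$, and that two such functions are isomorphic exactly when these arrays are simultaneously permutation-equivalent. This is precisely node-labeled weighted-graph isomorphism on the pair matrix; for subpopulation functions derived from a simple graph, where the quadratic coefficients take only the two values corresponding to edge / non-edge and the linear coefficients reflect isomorphism-invariant node attributes, simultaneous permutation-equivalence collapses to an adjacency-preserving bijection, i.e. to $G \cong H$. Hence $I_{\mathrm{SBI}(\deg \leq 2)}$ is no finer than $I_{\mathrm{GI}}$, and combining with the first inclusion gives equality.

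The hard part will be reconciling the Boolean-valued requirement with the degree-$\leq 2$ restriction. A genuinely $\{0,1\}$- (equivalently $\{-1,1\}$-) valued function of Fourier degree $2$ is far more constrained than an arbitrary quadratic multilinear polynomial, since low-degree Boolean functions obey junta-type bounds, so one cannot naively load an entire $n$-node adjacency matrix into the quadratic spectrum of a single global Boolean function. The honest route, which I would take, is to retain the pair-indexed encoding of the adjacency theorem (one Boolean value per $2$-subset) and to argue the degree-$2$ character at the level of \emph{which} node subsets can carry information, rather than as the multilinear degree of one monolithic function. Making this identification between ``information supported on $\leq 2$-subsets'' and ``the $\mathrm{Adj}$ restriction'' fully rigorous, and checking that the quadratic layer cannot smuggle in edge-weight distinctions finer than simple adjacency, is the crux on which the whole equality rests.
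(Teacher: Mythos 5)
Your diagnosis is sharper than the paper's own proof, and the ``hard part'' you isolate in your final paragraph is exactly where that proof breaks. The paper's argument is three sentences long: a degree-$\leq 2$ function ``depends only on linear terms and pairwise interactions,'' pairwise terms ``fully capture'' the adjacency matrix, hence equality with $\mathrm{Im}(I_{\mathrm{GI}})$. This commits precisely the conflation you warn against, between the multilinear Fourier degree of a single Boolean-valued function and the condition that $f_G$ vanishes on all subsets of size greater than $2$ (the $\mathrm{Adj}$-type support restriction of the preceding theorem). Under the paper's own definition of $\deg_{\mathcal{F}}$, neither direction survives: the weight-$2$ adjacency indicator used in the $\mathrm{Adj}$ theorem is \emph{not} degree-$2$ --- it must verify that the Hamming weight is exactly $2$, and its multilinear expansion has degree $\Theta(n)$ --- so your second paragraph's claim that the $\mathrm{Adj}$ encoding ``is a special case of a degree-$\leq 2$ encoding'' would fail if read literally (you correct this yourself later). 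Conversely, your junta-type worry is a theorem: by Nisan--Szegedy, a Boolean-valued function of degree $d$ depends on at most $d2^{d-1}$ variables, so degree $\leq 2$ forces a $4$-junta, which cannot determine an $n$-node adjacency matrix. Hence under the literal reading the stated equality is false --- $I_{\mathrm{SBI}(\deg \leq 2)}$ is far \emph{weaker} than $I_{\mathrm{GI}}$ --- and the paper's proof never confronts this.

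Your repaired route --- keep one Boolean value per $2$-subset and interpret the constraint as ``information supported on subsets of size $\leq 2$'' rather than as multilinear degree --- is the only reading under which the theorem holds, and under it the argument reduces almost verbatim to the earlier theorem $\mathrm{Im}(I_{\mathrm{GI}}) = \mathrm{Im}(I_{\mathrm{SBI}(\mathrm{Adj})})$, plus routine bookkeeping for the singleton and constant layers (node-level data that any relabeling permutes compatibly). Your reverse-inclusion sketch --- that such a function is determined by its coefficient arrays, that function isomorphism is simultaneous permutation-equivalence of those arrays, and that for $\{0,1\}$-valued edge data this collapses to an adjacency-preserving bijection --- is sound and supplies detail the paper omits entirely. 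So the verdict is: your plan, once the crux you flag is resolved by adopting the support-level reading, is correct and strictly more careful than the paper's proof; the genuine gap you identify lies in the statement and the paper's argument, not in your reasoning, and an honest write-up should either restate the theorem with the support restriction or weaken the equality accordingly.
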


\begin{proof}
Any Boolean function with Fourier degree at most 2 depends only on linear terms and pairwise interactions between input variables. In the subpopulation context, this means the function can encode:
\begin{itemize}
    \item Presence or absence of individual nodes (irrelevant for unlabeled graphs).
    \item Presence or absence of edges, i.e., pairs $(u,v)$ such that $(u,v) \in E(G)$.
\end{itemize}
Higher-order structures such as triangles, cliques, or global properties require Fourier degree $\geq 3$. The adjacency matrix is fully captured by pairwise terms. Therefore, the SBI invariant under degree-2 constraint is equivalent to the graph isomorphism invariant.
\end{proof}

\begin{definition}
Let $\mathrm{Inf}_i(f_G)$ denote the influence of variable $i$ on $f_G$. Define the total influence as:
\[
\mathrm{Inf}(f_G) = \sum_{i} \mathrm{Inf}_i(f_G)
\]
The influence-constrained SBI invariant is:
\[
I_{\mathrm{SBI}(\mathrm{Inf} \leq T)}(G) := f_G \text{ such that } \mathrm{Inf}(f_G) \leq T
\]
\end{definition}

\begin{theorem}
Low-influence SBI functions correspond to subpopulation properties that are stable under small perturbations of node sets. The expressivity of $I_{\mathrm{SBI}(\mathrm{Inf} \leq T)}$ excludes brittle, parity-like functions and emphasizes smooth, low-complexity subgroup interactions.
\end{theorem}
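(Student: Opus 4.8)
The plan is to decompose this qualitatively-phrased claim into two quantitative statements and prove each using the Fourier-analytic identities recorded in the preliminaries. The first statement formalizes ``stability under small perturbations'' through the noise-sensitivity functional; the second formalizes ``excludes parity-like functions'' by a direct influence computation and supplies the ``low-complexity'' half via a junta-structure consequence.

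First I would make the stability claim precise. Using the identity $\mathrm{Inf}(f_G) = \sum_{S} |S|\, \hat{f}_G(S)^2$ together with the expansion $\mathrm{Stab}_\rho(f_G) = \sum_S \rho^{|S|}\hat{f}_G(S)^2$, I would control the noise sensitivity $\mathrm{NS}_\delta(f_G) = \Pr[f_G(x) \neq f_G(y)]$, where $y$ is obtained from uniform $x$ by flipping each coordinate independently with probability $\delta$. Since $\mathrm{NS}_\delta(f_G) = \tfrac{1}{2}\sum_S (1 - (1-2\delta)^{|S|})\hat{f}_G(S)^2$ and $1 - (1-2\delta)^{|S|} \le 2\delta|S|$, one obtains $\mathrm{NS}_\delta(f_G) \le \delta \cdot \mathrm{Inf}(f_G)$. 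Under the constraint $\mathrm{Inf}(f_G) \le T$ this yields $\mathrm{NS}_\delta(f_G) \le \delta T$. Reading a ``small perturbation of a node set'' as flipping a $\delta$-fraction of the indicator bits, this shows the fairness label $f_G(S)$ changes with probability at most $\delta T$, which is the intended quantitative notion of stability.

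Next I would establish exclusion of parity-like functions. For the $d$-variable parity $\chi_{[d]}$ the unique nonzero coefficient is $\hat{f}([d]) = 1$, so each relevant variable has $\mathrm{Inf}_i = 1$ and hence $\mathrm{Inf}(\chi_{[d]}) = d$, the maximal total influence for a function on $d$ relevant coordinates. Consequently any threshold $T < d$ forces such a parity out of $I_{\mathrm{SBI}(\mathrm{Inf}\le T)}$, and full parity on $n$ bits (total influence $n$) is excluded for every fixed $T$. For the complementary ``low-complexity'' assertion I would invoke the junta theorem~\cite{friedgut1998boolean}: if $\mathrm{Inf}(f_G) \le T$ then for any $\epsilon > 0$, $f_G$ is $\epsilon$-close to a junta on at most $2^{O(T/\epsilon)}$ coordinates, so admissible functions essentially depend on a bounded set of nodes and encode smooth, low-order subgroup interactions rather than global parities.

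The main obstacle here is definitional rather than computational: because the statement is phrased informally, the substance lies in fixing the right formalizations---identifying ``stability'' with bounded noise sensitivity and ``low complexity'' with junta-closeness---after which each piece reduces to a known inequality. The one genuine subtlety is that the junta bound and the noise-stability identities are most natural for $\pm 1$-valued functions under the uniform measure, so I would first pass through the change of variables $x_i = 1 - 2z_i$ recorded in the preliminaries to place $f_G$ on $\{-1,1\}^n$, and I would flag that the junta size $2^{O(T/\epsilon)}$ degrades exponentially in $T/\epsilon$, so the ``low-complexity'' conclusion is meaningful only in the regime of genuinely small $T$.
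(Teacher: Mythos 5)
Your proposal is correct, and it follows the same Fourier-analytic route the paper gestures at, but it supplies essentially all the quantitative content the paper omits. The paper's own proof is two sentences: it appeals to ``the KKL theorem and subsequent results'' to assert that low total influence implies noise stability and low-degree Fourier concentration, and then reads off the conclusion. Your decomposition is sharper on two counts. First, you prove the stability half directly from the spectral identities, via $\mathrm{NS}_\delta(f_G) = \tfrac{1}{2}\sum_S \bigl(1-(1-2\delta)^{|S|}\bigr)\hat{f}_G(S)^2 \le \delta \cdot \mathrm{Inf}(f_G) \le \delta T$, which is elementary and needs no heavy hammer --- KKL (about the existence of a single influential coordinate in balanced functions) is arguably not the right tool for this direction at all, so your replacement is an improvement, not just an elaboration. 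Second, for the ``low-complexity'' half you invoke Friedgut's junta theorem, which is the theorem that actually delivers ``small total influence $\Rightarrow$ close to a junta''; notably the paper cites \cite{friedgut1998boolean} in its background section but then fails to use it in the proof, citing KKL instead. Your explicit computation that $d$-variable parity has total influence exactly $d$, hence is excluded whenever $T < d$, makes the ``excludes parity-like functions'' clause precise where the paper leaves it implicit. Your two flagged caveats --- passing to $\{-1,1\}^n$ before using the spectral identities, and the exponential degradation $2^{O(T/\epsilon)}$ of the junta size --- are both genuine and correctly handled; the second in particular is an honest limitation that the paper's proof glosses over entirely. In short: same conceptual skeleton, but your version replaces an inapt citation with the correct lemmas and turns an informal sketch into an actual proof.
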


\begin{proof}
By the KKL theorem and subsequent results, Boolean functions with low total influence are noise-stable and tend to have Fourier mass on low-degree terms. Therefore, SBI constrained by influence behaves similarly to one constrained by Fourier concentration but captures a different smoothness property related to stability under input perturbations rather than strict degree bounds.
\end{proof}

\section{Application to Fair Graph Learning}
Current state-of-the-art  fair GNN approaches do not provide a general framework which can address their ability to ensure fairness in arbitrary subpopulations and can usually cater to simple sensitive attributes like `gender', `race', `region'  etc. rather than complex combinations of these attributes (e.g. ``young females from region $A$'') which are often useful in fairness applications and referrred to as \emph{intersectionality}. Based on our proposed framework in the previous Section we can assess the power of a given fair GNN architecture based on their ability to distinguish different subpopulation boolean functions. For example, we can make statements of the form: \emph{a fair GNN architecture that is 1-WL bounded can't ensure fairness for arbitrary subpopulations}. Thus, our framework provides a theoretical explanation for the failure modes of fairness algorithms that rely on limited expressivity models. Although our framework is naturally fits for node level fairness tasks like fair node classification, it can easily be adapted for edge level tasks like fair link prediction by considering boolean functions of the form $f: 2^{\binom{n}{2}} \rightarrow \{0,1\}$. 
Overall, the SBI-based framework provides a principled and extensible foundation for designing graph learning models that are both expressive and fairness-aware, particularly in settings where group fairness must account for complex population structures inherent in real-world networks. Using our framework we can prove an upper bound on the complexity of subpopulations that can be handled by a particular fairness aware graph learning.
\begin{table*}[ht]
\centering
\begin{tabular}{|l|c|c|c|}
\hline
\textbf{Method} & \textbf{Mechanism Type} & \textbf{Max Boolean Degree} & \textbf{Circuit Class} \\
\hline
Fairwalk     & Walk Sampling   & 1   & AC$^0$ \\
FairAdj      & Edge Perturbation & 1   & AC$^0$ \\
EDITS        & Embedding Debiasing & 1   & AC$^0$ \\
UGE          & Attribute-level Generator & 2 & AC$^0$–NC$^0$ \\
FairWire     & Diffusion-based with regularizer & 2 & Approx. NC$^1$ \\
FairSIN      & Invariant Penalty + Embedding Alignment & 3 & NC$^1$ \\
{\bf FairSBF} & Circuit-aware Fairness Optimization & $\geq 4$ & \textbf{NC$^1$+} \\
\hline
\end{tabular}
\label{tab:fairgnn-expressivity}
\caption{Expressive capacity of existing Fair GNNs in terms of the complexity of subpopulation Boolean functions they can handle. Complexity is described using circuit classes and Boolean properties (e.g., degree, influence). Our method \textbf{FairSBF} is the only method capable of addressing general Boolean functions in the class $\mathrm{NC^1}$, unlike prior approaches that are limited to shallow conjunctions or walk-based subpopulations.}
\end{table*}

\begin{figure*}[ht]
\centering
\includegraphics[scale=.6]{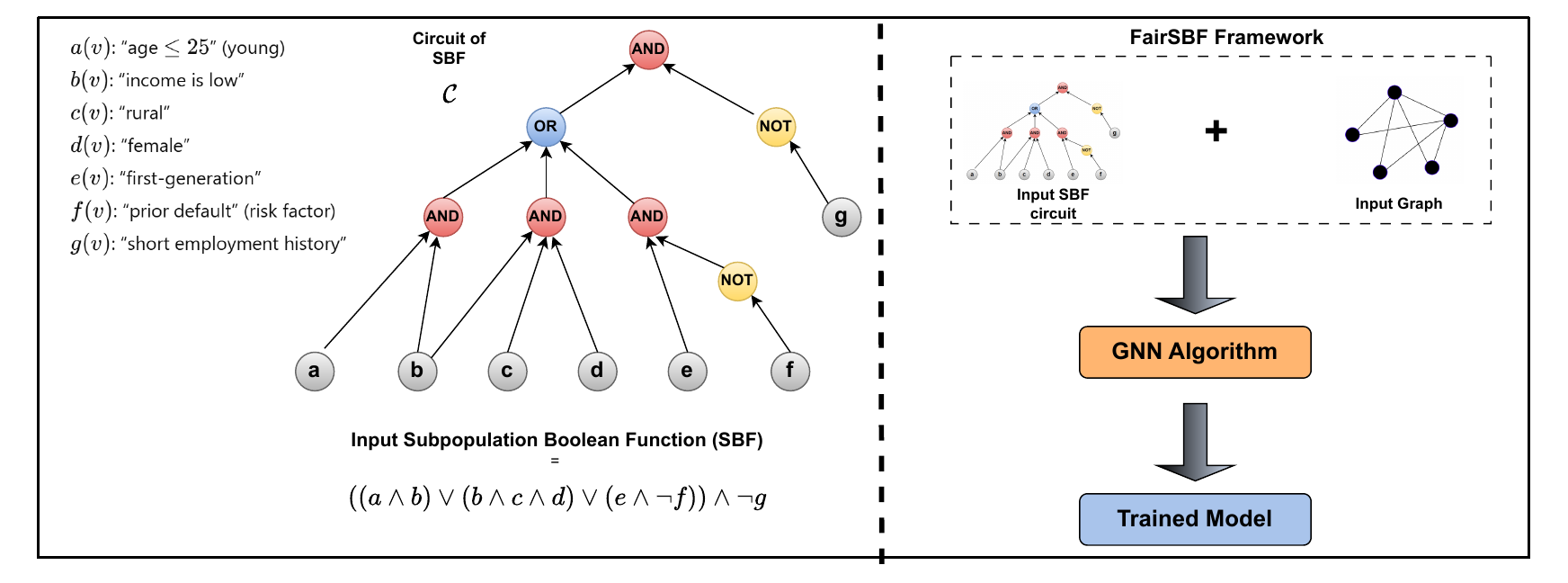}
\caption{The left part of the figure shows an example of an input Subpopulation Boolean Function along with the corresponding circuit involving several attributes. The right part shows the FairSBF framework which takes as input the (training) graph as well as the SBF circuit to a GNN algorithm, and creates a new regularizer based training algorithm to produce the final trained model.
}
\label{fig:sbfcircuit}
\end{figure*}

\begin{figure}[ht]
\centering
\includegraphics[width=0.48\textwidth]{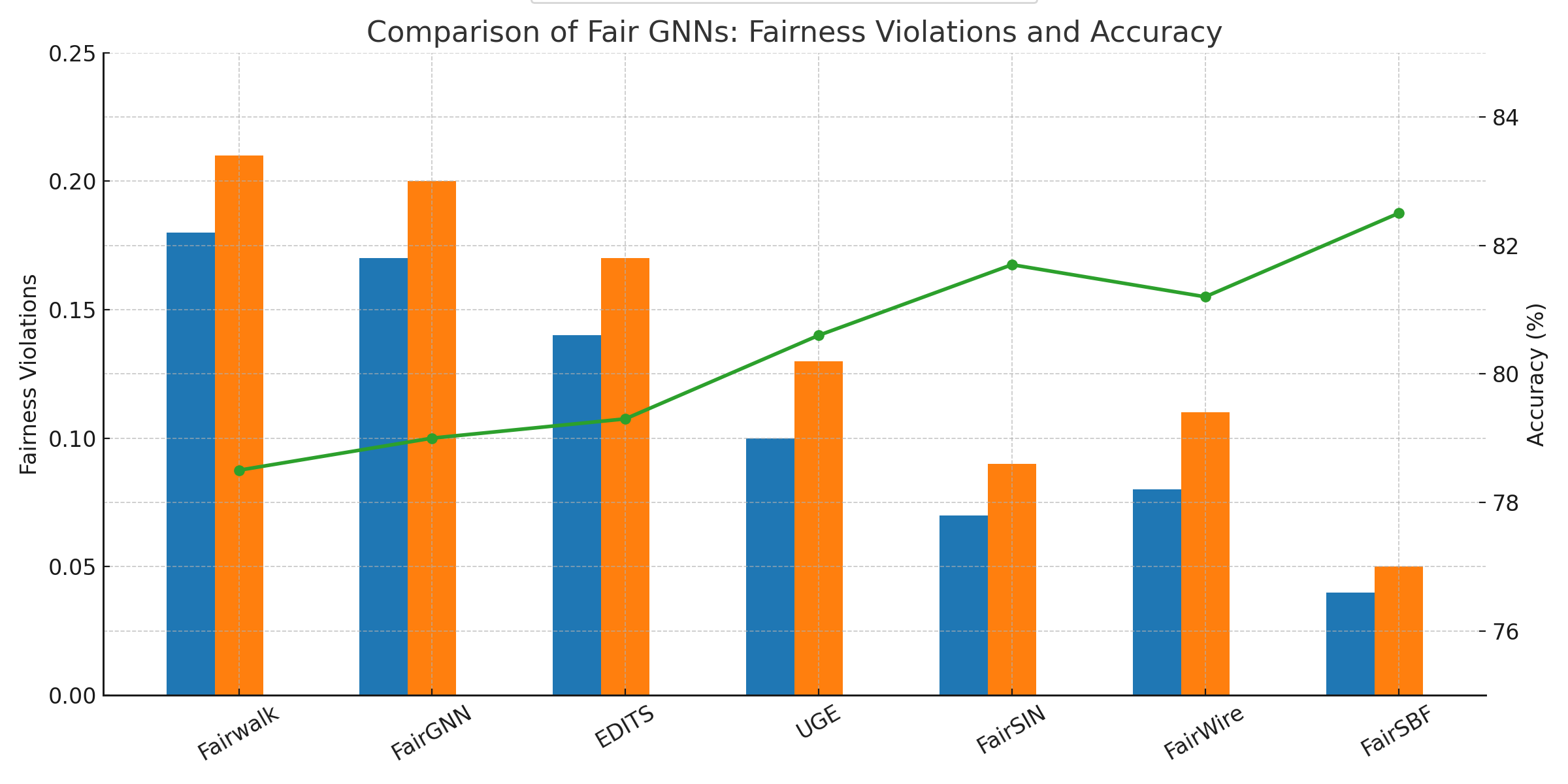}
\caption{Performance comparison of Fair GNNs on the Pokec-z dataset with complex subpopulations. Our method \textbf{FairSBF} achieves the best trade-off, showing the highest accuracy and the lowest fairness violations (DDP, DEO). Unlike other methods which degrade on high-complexity Boolean functions, FairSBF remains robust due to its circuit-theoretic fairness layer.
}
\label{fig:faircircuit-results}
\end{figure}

\begin{theorem}
Let \( f: \{0,1\}^n \to \{0,1\} \) be a Boolean function representing a subpopulation fairness constraint. Suppose a GNN model \( \mathcal{A} \) satisfies one of the following properties:

\begin{itemize}
    \item \textbf{Property \( \mathcal{P}_{\text{adv}} \)} (Adversarial Feature Alignment): The model enforces fairness by aligning latent representations across groups defined by observed node-wise sensitive attributes using an adversarial discriminator.

    \item \textbf{Property \( \mathcal{P}_{\text{edge}} \)} (Feature-based Edge Masking): The model enforces fairness by reweighting or masking the adjacency matrix using conditions based on pairwise sensitive attribute similarity.

    \item \textbf{Property \( \mathcal{P}_{\text{walk}} \)} (Feature-based Biased Walks): The model enforces fairness by modifying random walks to prefer transitions between nodes sharing sensitive attribute values.
\end{itemize}

Then, for any GNN \( \mathcal{A} \) satisfying one of the above properties, the model can enforce fairness only for subpopulations defined by Boolean functions \( f \) satisfying the condition that $\deg_{\mathcal{F}}(f)$ and $f$ depends only on observed sensitive attributes (node-wise or pairwise). In particular, \( \mathcal{A} \) cannot express fairness over subpopulations defined by:
\begin{itemize}
    \item conjunctions of 3 or more attributes (e.g., \( f(x) = x_1 \land x_2 \land x_3 \)),
    \item parity or XOR-like functions (e.g., \( f(x) = x_1 \oplus x_2 \oplus x_3 \)),
    \item structural subgroups (e.g., nodes with common neighbors or global motifs),
    \item non-local or multi-hop interactions between sensitive attribute patterns.
\end{itemize}
\end{theorem}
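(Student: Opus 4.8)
The plan is to treat each of the three mechanisms as a \emph{low-degree fairness feature map} and then reuse the Fourier-orthogonality argument from our earlier Fourier-degree barrier result. First I would formalize, for each property, the effective Boolean function over the sensitive attributes on which the mechanism is capable of acting. For $\mathcal{P}_{\text{adv}}$, the adversarial discriminator receives node-wise sensitive labels $s_v$ and penalizes representation discrepancies one node at a time, so the induced fairness signal is a sum of terms each depending on a single coordinate; it therefore lies in $\mathrm{span}\{\chi_{\emptyset}, \chi_{\{i\}}\}$ and has Fourier degree at most $1$. For $\mathcal{P}_{\text{edge}}$ and $\mathcal{P}_{\text{walk}}$, the mask $m_{uv}$ (respectively the biased transition weight) is a function of the pair $(s_u, s_v)$ only — typically the similarity indicator $\mathbf{1}[s_u = s_v]$ — so each enforcement term depends on at most two coordinates and hence has Fourier degree at most $2$. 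This gives a uniform degree bound $k \in \{1, 2\}$ associated with each property.

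Next I would argue that any subpopulation function $f$ that such a mechanism can enforce must itself factor through this low-degree feature map. The key observation is that the mechanism acts identically on any two subpopulations that agree on all coordinate-wise (respectively pairwise) sensitive features, so it is blind beyond degree $k$. Writing $f = \sum_S \hat{f}(S)\chi_S$ and decomposing the enforceable signal into its orthogonal Fourier components, the mechanism can influence only the projection of $f$ onto $\mathrm{span}\{\chi_S : |S| \leq k\}$. If $\deg_{\mathcal{F}}(f) = d > k$, then some $S$ with $|S| = d$ has $\hat{f}(S) \neq 0$ and lies in the orthogonal complement of that span; by Parseval the residual fairness error is at least $\sum_{|S|=d}\hat{f}(S)^2 > 0$, exactly mirroring the computation in the Fourier-degree barrier proof. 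Hence every enforceable $f$ must satisfy $\deg_{\mathcal{F}}(f) \leq k$ and must depend only on the observed sensitive attributes, since those are the sole inputs the mechanism reads.

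Finally I would instantiate the four impossibility claims. Both the conjunction $x_1 \land x_2 \land x_3$ and the parity $x_1 \oplus x_2 \oplus x_3$ have Fourier degree exactly $3 > 2 \geq k$, so neither is enforceable; for parity the full Fourier mass sits on the single coefficient $\hat{f}(\{1,2,3\})$, making the residual error maximal. For structural subgroups and multi-hop attribute interactions, the defining functions depend on information — triangle membership, shared neighbors, $T$-hop patterns — that is simply absent from the node-wise or pairwise sensitive features consumed by any of the three mechanisms; combining the degree bound with the locality statement of the $T$-hop distinguishability theorem places these squarely outside the enforceable class.

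The hard part will be making rigorous the notion of a mechanism \emph{enforcing fairness for} $f$, and in particular ruling out an indirect channel whereby a purely pairwise reweighting might induce a genuinely degree-$3$ constraint through composition with the downstream message passing. I expect to resolve this by observing that the fairness objective enters as a regularizer that is a fixed low-degree function of the sensitive attributes, so its gradient — and therefore the constraint it can impose — never acquires Fourier support above degree $k$, irrespective of the expressive power of the underlying GNN.
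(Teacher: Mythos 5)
For context: the paper states this theorem \emph{without any proof} --- the text proceeds directly from the theorem statement to the description of the FairSBF algorithm --- so there is no official argument to compare against; the closest in-paper machinery is the earlier Fourier-degree barrier theorem, whose Parseval/orthogonality computation you correctly recycle. Your reconstruction of the visibly truncated condition ``$\deg_{\mathcal{F}}(f)$ and \dots'' as $\deg_{\mathcal{F}}(f) \leq 2$ together with dependence only on observed attributes is the sensible reading, your degree assignments ($k=1$ for $\mathcal{P}_{\text{adv}}$, $k=2$ for $\mathcal{P}_{\text{edge}}$ and $\mathcal{P}_{\text{walk}}$) are reasonable formalizations, and the instantiations are accurate: both the $3$-way conjunction and the $3$-bit parity have Fourier degree exactly $3$, with parity's mass on the single coefficient $\hat{f}(\{1,2,3\})$.

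The genuine gap is precisely the step you flagged as the hard part and then resolved incorrectly. Your claim that the fairness regularizer's gradient ``never acquires Fourier support above degree $k$'' is false: low-degree masks compose \emph{multiplicatively} under message passing, and on the hypercube products of pairwise characters escalate in degree. Concretely, in $\pm 1$ notation take the similarity mask $m_{uv} = \frac{1+s_u s_v}{2}$; the weight of a three-hop path $u \to w \to x \to v$ is $m_{uw} m_{wx} m_{xv} = \frac{1}{8}(1+s_u s_w)(1+s_w s_x)(1+s_x s_v)$, whose expansion contains the degree-$4$ character $\frac{1}{8} s_u s_w s_x s_v$. Hence the model's outputs, the loss, and its gradients all acquire Fourier support above degree $2$, and your proposed closure of the indirect channel collapses --- taking with it the derivation of the degree bound itself, since that bound rested on the same claim. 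What actually closes the channel is an argument about what the objective \emph{targets}, not about the Fourier support of its gradient: each mechanism only ever evaluates disparities over groups definable from the predicates it reads, so the right lemma is an indistinguishability statement --- exhibit two sensitive-attribute assignments that agree on every input the mechanism consumes but on which $f$ carves out different subpopulations; the mechanism then behaves identically on both, while enforcing fairness for $f$ would require it to behave differently. For parity this is clean: the global flip $s \mapsto -s$ preserves every pairwise similarity indicator $\mathbf{1}[s_u = s_v]$ (since $(-s_u)(-s_v) = s_u s_v$) but negates the odd-degree character $s_1 s_2 s_3$, so $\mathcal{P}_{\text{edge}}$ and $\mathcal{P}_{\text{walk}}$ cannot separate the two parity classes; an analogous counting argument handles the $3$-way conjunction, and $\mathcal{P}_{\text{adv}}$ follows from its node-wise ($k=1$) restriction. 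Your Parseval computation is then the correct tool for quantifying the residual disparity, but only after this invariance step replaces the unsound gradient-support claim; the structural and multi-hop bullets additionally require the paper's $T$-hop locality theorem, as you note, since those functions read information absent from the mechanisms' interfaces altogether.
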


In the following we provide an algorithm {\bf FairSBF} that can cater to the aforementioned boolean functions. FairSBF is trained iteratively, alternating between standard GNN message passing and fairness-aware supervision. In each epoch, node embeddings are updated via GNN layers, and predictions are evaluated for fairness across all subpopulations defined by the intermediate gates of a Boolean circuit $\mathcal{C}_f$. The fairness loss, aggregated over these gates, is combined with task loss to guide parameter updates. This iterative approach arises from the need to handle fairness constraints that are inherently non-decomposable and depend on internal logical structure. By embedding these constraints within each training loop, FairSBF incrementally aligns predictions with fairness across logically structured subpopulations.

\section{Experiments}
We evaluate our proposed algorithm \textbf{FairSBF} on the Pokec-z dataset, a real-world social network with three sensitive attributes namely \texttt{gender}, \texttt{age}, and \texttt{region} of which \texttt{gender} and \texttt{region} are binary and \texttt{age} is multivalent. In our experiments on the Pokec-z dataset, we construct subpopulations using the following Boolean function over gender (0/1), region (0/1), and age (0/1) : $f(x) = (({\tt gender} = 0 \wedge {\tt region} = 0) \vee ({\tt gender} = 1 \wedge {\tt age} = 0)) \oplus ({\tt region} = 1 \wedge {\tt age} = 1)$ where $\oplus$ represents the XOR boolean function. This function captures rich overlapping subgroups and belongs to the NC$^1$ circuit class due to the XOR function. For prior baselines such as FairGNN, UGE, and FairSIN, which cannot handle such expressive subpopulation logic, we use simplified AC$^0$ variants of the function (e.g., conjunctions over one or two attributes). As shown in Figure~\ref{fig:faircircuit-results}, our proposed FairSBF model achieves significantly lower fairness violations under the full NC$^1$ function, while other methods fail to generalize to this richer setting.

\section{Conclusion}
We introduced a Boolean function-theoretic framework for analyzing GNN expressivity in fairness-aware learning, using subpopulation Boolean functions to capture rich, structured fairness constraints. This approach allowed us to formally characterize the limitations of existing fair GNNs, which are typically restricted to shallow, attribute-level subgroups. Our proposed method, \textbf{FairSBF}, addresses these expressivity gaps by enforcing fairness over complex subpopulations defined by Boolean circuits in AC$^0$ and NC$^1$. By leveraging the internal structure of these circuits during training, FairSBF achieves fairness not only on the final subgroup but also across intermediate logical components. Empirical results on real-world graphs validate the theoretical advantages, showing that FairSBF consistently outperforms prior baselines in both fairness violation and predictive accuracy.

\begin{algorithm}[tb]

\SetAlgoLined

\caption{{\bf FairSBF}- Fairness-Aware GNN with Subpopulation Boolean Function $f \in \text{AC}^0 \cup \text{NC}^1$}
\KwIn{Graph $G = (V, E)$, node features $X \in \mathbb{R}^{|V| \times d}$, initial embeddings $H^{(0)}$, Boolean function $f$ with circuit $\mathcal{C}_f$, classifier $\hat{y}_v$}
\KwOut{Trained GNN with fairness-aware parameters}

\For{each training epoch}{
    Compute node embeddings $H^{(T)}$ using GNN\;
    Predict node outputs: $\hat{y}_v = \text{Classifier}(h_v^{(T)})$\;
    Initialize $\mathcal{L}_{\text{fair}} \gets 0$\;

    \For{each intermediate gate $g$ in $\mathcal{C}_f$}{
        Define subpopulation $S_g \gets \{ v \in V \mid g(x_v) = 1 \}$\;
        \If{$0 < |S_g| < |V|$}{
            Compute baseline mean: $\mu \gets \frac{1}{|V|} \sum_{v \in V} \hat{y}_v$\;
            Compute fairness gap: $\Delta_g \gets \left| \frac{1}{|S_g|} \sum_{v \in S_g} \hat{y}_v - \mu \right|$\;
            Update fairness loss: $\mathcal{L}_{\text{fair}} \gets \mathcal{L}_{\text{fair}} + \Delta_g$\;
        }
    }

    Compute task loss: $\mathcal{L}_{\text{task}} \gets \text{CrossEntropy}(\hat{y}, y)$\;
    Compute total loss: $\mathcal{L} \gets \mathcal{L}_{\text{task}} + \lambda \cdot \mathcal{L}_{\text{fair}}$\;
    Backpropagate and update model parameters using $\mathcal{L}$\;
}
\Return{Trained GNN model}
\end{algorithm}

\ifCLASSOPTIONcaptionsoff
  \newpage
\fi



%
\bibliographystyle{IEEEtran}
\bibliography{sample-base}

\begin{IEEEbiography}
    [{\includegraphics[width=1.05in,height=1.25in]{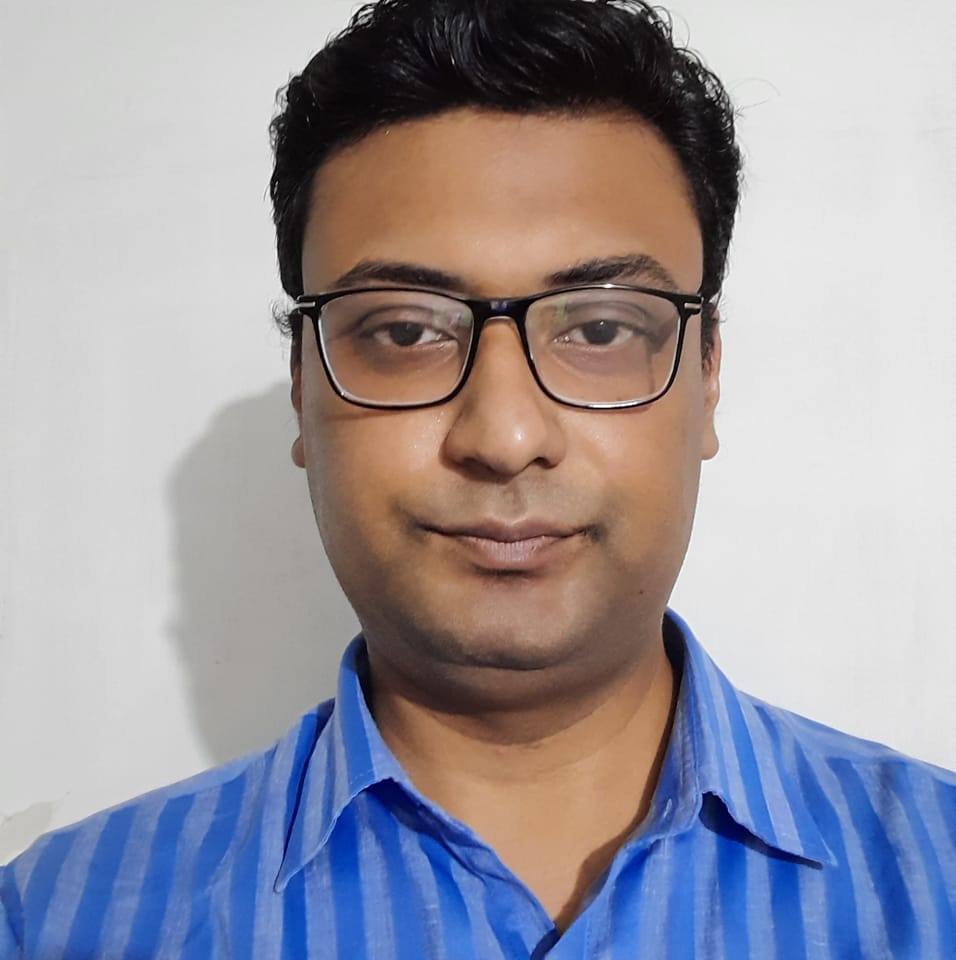}}]{Manjish Pal}
received his B-Tech and M-Tech degrees from IIT-Kanpur and PhD degree from IIT-Kharagpur, India. He has visited several world renowned universities like Princeton University, ETH Zurich, Tel-Aviv University and was an Assistant Professor of Computer Science at NIT-Meghalaya, India for five years. He is currently a Research Fellow in National Law University Meghalaya at Shillong, India. His interests lie in Fairness in Machine Learning, Optimization and Combinatorics.
\end{IEEEbiography}





\end{document}